\pgfplotsset{compat=1.5.1}
\newcommand{\norm}[1]{\left\lVert#1\right\rVert}
\DeclarePairedDelimiter{\ceil}{\lceil}{\rceil}
\DeclarePairedDelimiter{\abs}{\lvert}{\rvert}
\DeclarePairedDelimiterX{\p}[1]{(}{)}{#1}
\DeclarePairedDelimiterX{\br}[1]{[}{]}{#1}
\newcommand{\s}[1]{\left\{ #1\right\}}
\newcommand{\E}[2][]{\ifthenelse{\equal{#1}{}}{\displaystyle \mathop{\mathbb{E}}\left[#2 \right]}{\underset{#1}{\displaystyle \mathop{\mathbb{E}}}\left[#2 \right]}}
\newcommand{\reg}[2][]{\ifthenelse{\equal{#1}{}}{\mathbb{E}\left[#2 \right]}{R_{#1}\left(#2 \right)}}
\newcommand{\empiricalRisk}{L_{\trainingData{}}}
\newcommand{\trueRisk}{L_{\sampleDist}}
\renewcommand{\Pr}[2][]{\ifthenelse{\equal{#1}{}}{\mathbb{P}\left[#2 \right]}{\underset{#1}{\mathbb{P}}\left[#2 \right]}}
\newcommand{\opt}{*}
\newcommand{\subsetB}{B}
\newcommand{\Ndim}{Ndim}
\renewcommand{\vec}{\bm}
\newcommand{\mean}[1]{\bm{\mu}_{\mathsf{#1}}}
\newcommand{\adversarialRiskFunc}{\rho}
\newcommand{\naturalRiskFunc}{\nu}
\newcommand{\natarajanDimension}{\mathrm{NDim}}
\newcommand{\standardDeviation}{\sigma}
\newcommand{\sgn}{\mathrm{sgn}}
\newcommand{\given}{\:\middle\vert\:}
\newcommand{\transpose}{\mathsf{T}}
\newcommand{\sample}{\vec{x}}
\newcommand{\sampleIntro}{\sample}
\newcommand{\perturbationIntro}{\perturbation}
\newcommand{\trainingData}[1]{S_{#1}}
\newcommand{\unlabeledTrainingData}[1]{Q_{#1}}
\newcommand{\scalarParameter}{\sigma}
\newcommand{\scalarParameterVar}{\vartheta_{\unlabeledTrainingData{},\noiseDist}}
\newcommand{\marker}{*}
\newcommand{\markerAlt}{\dagger}
\newcommand{\numSamples}{N}
\newcommand{\queryDist}{\mathcal{P}}
\newcommand{\sampleDist}{\mathcal{D}}
\newcommand{\error}{\eta}
\newcommand{\confidence}{\delta}
\newcommand{\realNumbers}{\mathbb{R}}
\newcommand{\naturalNumbers}{\mathbb{N}}
\newcommand{\dimension}{d}
\newcommand{\labelElement}[1]{y_{#1}}
\newcommand{\loss}{\ell}
\newcommand{\sampleComplexity}[1]{m_{#1}}
\newcommand{\inputSpace}{\mathcal{X}}
\newcommand{\outputSpace}{\mathcal{Y}}
\newcommand{\outputSpaceInstance}{\outputSpace}
\newcommand{\hypothesisClass}{\mathcal{H}}
\newcommand{\hypothesisClassRestricted}{\mathcal{H}_{\unlabeledTrainingData{}}}
\newcommand{\hypothesisClassSmooth}{\mathcal{H}_{\unlabeledTrainingData{},\noiseDistParametrized}}
\newcommand{\drawnFrom}{\sim}
\newcommand{\learningAlgorithm}{\mathcal{A}}
\newcommand{\hyp}{\mathrm{h}}
\newcommand{\hypContinuous}{\mathrm{g}}
\newcommand{\placeholder}{z}
\newcommand{\approximation}{\mathsf{app}}
\newcommand{\estimation}{\mathsf{est}}
\newcommand{\placeholderFuncAltAlt}{\nu}
\newcommand{\placeholderFunc}{\psi}
\newcommand{\VCdimension}{d}
\newcommand{\noiseDist}{\mathsf{n}}
\newcommand{\func}{f}
\newcommand{\noiseDistParametrized}{\noiseDist_{\scalarParameter}}
\newcommand{\noiseDistInstance}[1]{\noiseDist_{#1,\scalarParameter}}
\newcommand{\noiseInstance}{\eta}
\newcommand{\constantClassificationSet}{\mathcal{B}_{\perturbationNormBound,p}}
\newcommand{\perturbation}{\Delta}
\newcommand{\smoothingFunction}{\mathcal{S}_{\noiseDistParametrized}}
\newcommand{\predictionRule}{f}
\newcommand{\zeroOne}{0\text{-}1}
\newcommand{\perturbationNormBound}{\epsilon}
\newcommand{\convolutionOperator}{\ast}
\newcommand{\definedAs}{\triangleq}
\newcommand{\zerosDimension}{\vec{0}_{\dimension}}
\newcommand{\inlineMatrix}[1]{\begin{bsmallmatrix} #1 \end{bsmallmatrix}}
\definecolor{solarBlue}{RGB}{44,54,67}
\definecolor{solarText}{RGB}{120,200,120}
\definecolor{solarText}{RGB}{255,200,63}
\definecolor{mathColor}{RGB}{39,222,222}
\newtheorem{theorem}{Theorem}
\newtheorem*{pac}{Sample Complexity of PAC learning}
\newtheorem{conjecture}[theorem]{Conjecture}
\newtheorem{corollary}[theorem]{Corollary}
\theoremstyle{plain}
\theoremstyle{definition}
\newtheorem{definition}[theorem]{Definition}
\newtheorem*{ERM}{Empirical Risk Minimization (ERM)}
\newtheorem*{noiseDistribution}{Noise Distribution}
\newtheorem*{shatterVC}{Hypothesis Shattering}
\newtheorem*{biasComplexityTradeoff}{Bias-Complexity Tradeoff}
\newtheorem*{learnability}{Realizablity and Learnability}
\newcommand{\stkout}[1]{\ifmmode\text{\sout{\ensuremath{#1}}}\else\sout{#1}\fi}
\renewcommand{\boldsymbol}{\vec}
\algnewcommand{\LineComment}[1]{\State \(\triangleright\) #1}
\crefname{thmenum}{theorem}{theorems}
\begin{document}

\date{}

\title{\Large \bf Analyzing Accuracy Loss in Randomized Smoothing Defenses}

\author{Yue Gao \thanks{Harrison Rosenberg and Yue Gao made equal contributions to this work.}}
\author{Harrison Rosenberg}
\author{Kassem Fawaz}
\author{Somesh Jha}
\author{Justin Hsu}
\affil{\tt{gy@cs.wisc.edu hrosenberg@ece.wisc.edu kfawaz@wisc.edu jha@cs.wisc.edu justhsu@cs.wisc.edu}}

\affil{University of Wisconsin--Madison}

\maketitle

\begin{abstract}
Recent advances in machine learning (ML) algorithms, especially deep neural
networks (DNNs), have demonstrated remarkable success (sometimes exceeding
human-level performance) on several tasks, including face and speech
recognition. However, ML algorithms are vulnerable to \emph{adversarial
attacks}, such test-time, training-time, and backdoor attacks. In test-time
attacks an adversary crafts adversarial examples, which are specially crafted
perturbations imperceptible to humans which, when added to an input example,  force a machine learning model to misclassify the given input example. Adversarial examples are a concern when deploying ML
algorithms in critical contexts, such as information security and autonomous
driving. Researchers have responded with a plethora of defenses. One
promising defense is \emph{randomized smoothing} in which a classifier's prediction is
smoothed by adding random noise to the input example we wish to classify. In this paper, we
theoretically and empirically explore randomized smoothing. We investigate the
effect of randomized smoothing on the feasible hypotheses space, and show that
for some noise levels the set of hypotheses which are feasible shrinks due to
smoothing, giving one reason why the natural accuracy drops after smoothing. To
perform our analysis, we introduce a model for randomized smoothing which
abstracts away specifics, such as the exact distribution of the noise. We
complement our theoretical results with extensive experiments.
\end{abstract}

\section{Introduction}
\label{sec:intro}
%
%
Recent advances in machine learning (ML), such as deep neural networks (DNNs), have paralleled  or exceeded human-level performance over a wide range of computer vision tasks, including image classification and visual recognition~\cite{DBLP:journals/nature/LeCunBH15}. In recent years, researchers have discovered that DNNs are vulnerable to various 
attacks (e.g. test-time, training-time, and backdoor attacks). In test-time attacks, which is the topic of this paper, an adversary crafts \emph{adversarial examples}--- a cleverly crafted human-imperceptible perturbation $\perturbationIntro$ that, when added to an input $\sampleIntro$, can cause the network to output an incorrect prediction for $\sampleIntro+\perturbationIntro$~\cite{DBLP:journals/corr/SzegedyZSBEGF13}. These attacks raise severe safety and security concerns, especially when ML models are deployed in critical systems, such as face-recognition based bio-metrics~\cite{DBLP:conf/bmvc/ParkhiVZ15}, malware detection \cite{DBLP:conf/malware/SaxeB15}, medical imaging~\cite{DBLP:journals/tmi/GreenspanGS16}, and autonomous driving \cite{DBLP:journals/corr/HuvalWTKSPARMCM15}.

%
In response to these attacks, researchers have proposed several defenses. These methods can be broadly divided into \emph{empirical} and \emph{certified} defenses. Adversarial training~\cite{DBLP:conf/iclr/KurakinGB17,DBLP:conf/iclr/MadryMSTV18} is an example of an empirical defense which has survived the recent onslaught of attacks~\cite{DBLP:conf/sp/Carlini017, DBLP:conf/iclr/MadryMSTV18}. Adversarial training borrows techniques from robust optimization to minimize a worst-case loss in a certain
set of perturbations. This defense, however, has the shortcoming of yielding a computationally expensive training procedure. Adversarial training scales poorly as datasets and DNNs become larger. On the other hand, certified defenses~\cite{DBLP:conf/sp/LecuyerAG0J19, DBLP:conf/icml/CohenRK19} guarantee that the classifier's prediction is stable within some set around the input $\sampleIntro$, often an $\ell_p$ norm ball around $\sampleIntro$. 

%
%
Randomized smoothing~\cite{DBLP:conf/sp/LecuyerAG0J19, DBLP:conf/icml/CohenRK19} is one popular approach for certified defense. This approach converts a base classifier into a \emph{smoothed} version through sampling its output from a noisy input distribution. The procedure typically involves sampling from a noise distribution centered about the input example, passing the samples through the classifier, and choosing the label using a majority vote. Unlike many prior certified defenses~\cite{raghunathan2018certified}, randomized smoothing can provide certification in $\ell_p$ norm with lower computational overhead than adversarial training, and thus scaling up to larger and more challenging datasets such as ImageNet~\cite{DBLP:conf/icml/CohenRK19}.

%
%
While showing promise in making models certifiably robust, several aspects of randomized smoothing require further exploration.
First, according to the results from literature, improved certification almost always comes at a cost of reduced accuracy of the smoothed model over natural examples.
Second, the relationship between the noise level in smoothing and the resulting accuracy of the smoothed model in \emph{adversarial} settings is not well-understood.
Finally, the empirical evaluation of randomized smoothing typically requires an unexplained training procedure with noise augmented dataset~\cite{DBLP:conf/icml/CohenRK19,DBLP:conf/sp/LecuyerAG0J19}. It is clear that 
certification is a straightforward outcome of smoothing but not of noise augmentation,
hence it is interesting to study the relationship between noise augmentation and randomized smoothing. 


In this paper, we perform an in-depth exploration of defenses based on randomized smoothing, and its relation to noise augmentation during training. We first provide a generalized formulation for randomized smoothing, including the noise augmented training procedure that precedes randomized smoothing. This formulation generalizes the smoothing approaches of Cohen et al.~\cite{DBLP:conf/icml/CohenRK19} and L{\'{e}}cuyer et al.~\cite{DBLP:conf/sp/LecuyerAG0J19}. Based on that, we analyze the impact of randomized smoothing and its noise augmented training procedure on the set of realizable hypotheses. Our main result identifies a critical noise threshold, beyond which the set of hypotheses realizable on training data after smoothing is a strict subset the set of hypotheses realizable on training data prior to smoothing. 

This result holds a significant implication on the performance of randomized smoothing. A hypothesis class of reduced size implies that it is easier for the training procedure to converge to the best hypothesis in the class (by requiring fewer training samples). 
It is not necessarily the case that all hypotheses in a hypothesis class which are realizable before smoothing are still realizable after smoothing. The implication is clear; the test accuracy over both natural and adversarial examples may decrease. Perhaps, this is an explanation of reduced accuracy of ML models after smoothing.

As such, one can view randomized smoothing as a trade-off between certification and accuracy. A noise distribution with little statistical dispersion yields a weak robustness certificate, but is less likely to have an adverse effect on natural accuracy. On the other hand, a noise distribution with large statistical dispersion improves the certification bounds and decreases estimation error at the risk of reducing test accuracy.

We conduct extensive experiments on noise augmentation and randomized smoothing to support our conclusions. First, the recent adversarial attack on randomized smoothing proposed by Salman et al.~\cite{smoothing-pgd} makes it possible to investigate randomized smoothing in adversarial settings. We empirically observe that noise augmentation alone can already provide adversarial robustness, and we also empirically observe smoothing is not effective without noise augmentation.
Then, we study the relationship between training accuracy, test accuracy, and the statistical dispersion of our noise distribution with a randomly labeled dataset~\cite{random-labeling}. Consistent with our main theoretical result, adding more noise pushes, in an informal sense, training accuracy closer to test accuracy at a considerable cost to both natural and adversarial accuracy.
Our main contributions are summarized below:
\begin{itemize}[leftmargin=*]
	\item {\it Importance of Noise Augmentation.} We draw attention to the noise augmented training procedure, which was earlier only empirically motivated through experiments. We provide a generalized formulation for randomized smoothing.  Our results suggest that the accuracy on a training set for a model trained with noise augmentation, when compared to a model trained without noise augmentation, is more indicative of natural and adversarial test accuracy of the same model after the randomized smoothing is applied.
	
	\item {\it Noise Augmentation Reduces the Realizable Hypothesis Space.} We show that noise augmented training, when compared to training without noise augmentation,  yields a set of a smaller set of hypotheses realizable on training data. This result has two implications on the performance of randomized smoothing:
		\begin{itemize}[leftmargin=*]
			\item First, the resulting smoothed classifier achieves improved robustness while losing its expressiveness. It is well documented in literature that this loss of expressiveness can contribute to a drop in accuracy.
			
			\item Second, randomized smoothing reduces the generalization error on the realizable set of hypotheses. While smoothing might come at a cost of test set accuracy, this accuracy becomes closer to the training set accuracy. 
		\end{itemize}
		
	\item {\it Empirical Support for Theoretical Results.} Our evaluation on several datasets and architectures provide empirical support for our theoretical arguments. More importantly, our evaluations with random labels attribute the above implications to noise augmentation, and indicate that smoothing (without training with noise) only complements the accuracy and provides robustness certification.
\end{itemize}

\section{Background}
\label{sec:background}

In this section, we describe the background material and notation necessary for understanding the concepts presented in this paper, such as randomized smoothing. In the rest of this paper, we denote vectors in lowercase bold script, and functions, sets, operators, and scalars in standard script. For a vector $\vec{\placeholder}$, $\placeholder_i$ denotes its $i^{\mathrm{th}}$ element.  The vector of all zeros in $\realNumbers^{\dimension}$ is denoted by $\zerosDimension$. $\realNumbers_+$ denotes the set of positive real numbers.

\subsection{ML Background and Notation}\label{subsec:MLBackground}


We consider a setting where we have some input set $\inputSpace \subseteq \realNumbers^{\dimension}$ and some label set $\outputSpace$. The training sequence is denoted as $\trainingData{} \subseteq \inputSpace \times \outputSpaceInstance{}$, where $\abs{\trainingData{}}=\numSamples$ and $\numSamples$ is a positive integer greater than $2$.  The sequence $\trainingData{}$ is i.i.d. sampled from an unknown probability distribution $\sampleDist$, where the marginal probability distribution of $\inputSpace{}$ is a continuous, non-degenerate function. We denote by $\unlabeledTrainingData{}$ the unlabeled training data.  That is, $\unlabeledTrainingData{} \definedAs \s{\sample : (\sample,\labelElement{}) \in \trainingData{}}$. As necessary, subscripts will be used when enumerating training data; it may be expedient to write $\trainingData{}\definedAs \s{(\sample_1,\labelElement{1}), \hdots, (\sample_{\numSamples},\labelElement{\numSamples})}$. Let $\hypothesisClass$ be a hypothesis class from which a deterministic prediction rule, $\hyp : \inputSpace \to \outputSpace$ is selected. For example, in a binary classification setting, a hypothesis class $\hypothesisClass$ which appears frequently in literature is the set of non-homogeneous linear classifiers in $\realNumbers^\dimension$, i.e. classifiers of the form $\hyp(\sample) = \sgn\s{\vec{w}^{\transpose}\sample + b}$ where $\vec{w} \in \realNumbers^{\dimension},b\in\realNumbers$.   Requiring $\numSamples$ to be a positive integer implies $\abs{\trainingData{}}$ is finite.  Requiring $\numSamples$ to be a positive integer while enforcing continuity and non-degeneracy of the marginal probability distribution of $\inputSpace$ implies that for any pair of samples $(\sample_i,\sample_j) \in \unlabeledTrainingData{}$ such that $i\neq j$, $\sample_i \neq \sample_j$ almost surely.
Note that when $\abs{\outputSpace{}} \geq 3$, $\abs{\unlabeledTrainingData{}} = \numSamples$ and $\natarajanDimension(\hypothesisClass)=\VCdimension$, $\abs{\hypothesisClass}$ is only known to have an upper bound.  In particular, $\abs{\hypothesisClassRestricted}\leq\numSamples^{\VCdimension}\cdot \abs{\outputSpace{}}^{2\VCdimension}$ \cite{shalev2014understanding}.  Consequently, within this paper, we will assume labels are mutually exclusive, and $\hypothesisClass$ is sufficiently expressive such that when $\numSamples \leq \natarajanDimension(\hypothesisClass)$, we have $\abs{\hypothesisClass}=\abs{\outputSpace{}}^{\numSamples}$. 

\begin{ERM}
We borrow much of the notation of this section from Shalev-Shwartz and Ben-David \cite{shalev2014understanding}. 

The true risk of a hypothesis $\hyp$ is defined to be 
\begin{equation}
    \trueRisk(\hyp) \definedAs \Pr[(\sample,\labelElement{})\drawnFrom \sampleDist]{\hyp(\sample) \neq \labelElement{}}
\end{equation}
The goal of learning is to find a hypothesis $\hyp$ for which $\trueRisk$ is minimized.  Unfortunately, the learner does not know $\sampleDist$, but the learner does have access to the training data $\trainingData{}$, an i.i.d. sample from $\sampleDist{}$.  The empirical risk of a hypothesis $\hyp$ is defined to be 
\begin{equation}\label{eq:empiricalRisk}
    \empiricalRisk(\hyp) \definedAs \frac{1}{\numSamples}\sum_{(\sample,\labelElement{}) \in \trainingData{}}\loss_{\zeroOne}\p*{\hyp(\sample), \labelElement{}}
\end{equation}

where $\loss_{\zeroOne}$ is the $0$-$1$ loss function:
\begin{equation}
    \loss_{\zeroOne}(\placeholder,\labelElement{}) = \begin{cases} 0 & \placeholder=\labelElement{} \\ 1 & \placeholder\neq\labelElement{} \end{cases}
\end{equation}

A learning algorithm $\learningAlgorithm_{\hypothesisClass,\loss_{\zeroOne}}$ selects a hypothesis from hypothesis class $\hypothesisClass$ which minimizes empirical risk $\empiricalRisk$.  As the $0$-$1$ loss function is non-convex, our experiments -- and several applications of machine learning -- minimize a convex surrogate loss which upper-bounds the $0$-$1$ loss. We denote by $\learningAlgorithm_{\hypothesisClass,\loss}$ a learner which yields a hypothesis $\hyp \in \hypothesisClass$ that minimizes some convex surrogate $\loss$ function for $0$-$1$ loss. Common convex surrogates for the $0$-$1$ include the cross entropy loss and the hinge loss. More precisely, the learning algorithm $\learningAlgorithm_{\hypothesisClass,\loss}$ can be formalized as:
\begin{equation}
    \learningAlgorithm_{\hypothesisClass,\loss}(\trainingData{}) \definedAs \arg\min_{\hyp \in \hypothesisClass}\s{\sum_{(\sample,\labelElement{}) \in \trainingData{}} \loss(\hyp(\sample),\labelElement{})}.
\end{equation}
 This procedure is known as Empirical Risk Minimization (ERM).
\end{ERM}

\begin{learnability}
We base our definitions of realizability, Probably Approximately Correct (PAC) learnability and Agnostic PAC learnability from Shalev-Shwartz and Ben-David \cite{shalev2014understanding}.

The realizability assumption states that there exists $\hyp^{\opt} \in \hypothesisClass$ such that $\trueRisk(\hyp^{\opt})=0$.

A hypothesis class $\hypothesisClass$ is PAC learnable if there exists a function $\sampleComplexity{\hypothesisClass} : (0,1)^2 \to \naturalNumbers$ and a learning algorithm with the following property: For every $(\error,\confidence) \in (0,1)$, and for every distribution $\sampleDist$ over $\inputSpace{} \times \outputSpace{}$, if the realizable assumption holds with respect to $\hypothesisClass,\sampleDist$, then when running the algorithm on $\numSamples \geq \sampleComplexity{\hypothesisClass}(\error,\confidence)$ i.i.d. examples generated by $\sampleDist$, the algorithm returns $\hyp$ with probability of at least $1-\confidence$, over the choice of examples,
\begin{equation}
\trueRisk(\hyp) \leq \error
\end{equation}

The function $\sampleComplexity{\hypothesisClass}(\error,\confidence)$ is also known as sample complexity.


Throughout this paper, we assume the realizability assumption holds for $\hypothesisClass$.  
\end{learnability}

\begin{biasComplexityTradeoff}
We base our definition and analysis of the bias complexity tradeoff from Shalev-Shwartz and Ben-David \cite{shalev2014understanding}.

The error of an ERM algorithm over hypothesis class $\hypothesisClass$, is often decomposed into two parts, approximation error $\error_{\approximation}$ and estimation error, $\error_{\estimation}$.  Let $\hyp = \learningAlgorithm_{\hypothesisClass,\loss_{\zeroOne}}(\trainingData{})$.  Then 
\begin{align}
    \trueRisk(\hyp) = \error_{\approximation} + \error_{\estimation} \text{ where }&\: \error_{\approximation}= \min_{\hyp' \in \hypothesisClass}\trueRisk(\hyp')\\ &\:\error_{\estimation}=\trueRisk(\hyp)-\error_{\approximation}\label{eq:estimation_error}
\end{align}

The approximation error captures inductive bias.  This term measures how much risk is contributed by selection of our hypothesis class.

The estimation error captures how well the learning algorithm $\learningAlgorithm_{\hypothesisClass,\loss_{\zeroOne}}$ estimates the predictor $\hyp^{\opt} \in \hypothesisClass$ which minimizes true risk $\trueRisk$.  The quality of this estimate depends on the size of the training set $\trainingData{}$, and on the size of the hypothesis class $\hypothesisClass$.

For fixed distribution $\sampleDist$ and hypothesis $\hyp$, this decomposition suggests a tradeoff between estimation error and approximation error.  The term \emph{overfitting} is often used to describe situations in which  approximation error is significantly less than estimation error.  Conversely, \emph{underfitting} is often used to describe situations in which estimation error is significantly less than approximation error.
\end{biasComplexityTradeoff}

\begin{shatterVC}
It will be useful later to reason about the size of the \textit{feasible} $\hypothesisClass$. Informally, a feasible hypothesis is one that satisfies the label assignment for a data sequence; not every hypothesis is a feasible given a data sequence. Formally, we define the feasible hypotheses as a restriction of $\hypothesisClass$ to $\unlabeledTrainingData{}$ (the unlabeled training data), denoted as $\hypothesisClassRestricted$, where  $\hypothesisClassRestricted$ is the set of functions from $\unlabeledTrainingData{}$ to $\outputSpace{}$ which can be derived from $\hypothesisClass$.  That is,
\begin{equation}
  \hypothesisClassRestricted \definedAs \s{ \hyp|_{\unlabeledTrainingData{}} : \hyp \in \hypothesisClass }
\end{equation}
where $\hyp|_{\unlabeledTrainingData{}} : \unlabeledTrainingData{} \to
\outputSpace{}$ is the restriction of $\hyp$ to domain
$\unlabeledTrainingData{}$;

To assist in counting the size of $\hypothesisClassRestricted$, we introduce the concept of multiclass hypothesis shattering from Shalev-Shwartz and Ben-David \cite{shalev2014understanding}. A hypothesis class $\hypothesisClass$ shatters a finite set $\unlabeledTrainingData{}\subset \inputSpace{}$ if there exist two functions $\predictionRule_0,\predictionRule_1 : \trainingData{} \to \outputSpace{}$ such that 
\begin{itemize}
    \item For every $\sample \in \unlabeledTrainingData{}$, $\predictionRule_0(\sample), \predictionRule_1(\sample)$.
    \item For every $\subsetB \subset \unlabeledTrainingData{}$, there exists a function $\hyp \in \hypothesisClass{}$ such that%
    \begin{equation}
        \forall \sample \in \subsetB, \hyp(\sample)=\predictionRule_0(\sample) \text{ and } \forall \sample \in \unlabeledTrainingData{} \setminus \subsetB, \hyp(\sample) = \predictionRule_1(\sample)
    \end{equation}
\end{itemize}

The Natarajan dimension of $\hypothesisClass$, denoted by $\Ndim(\hypothesisClass)$, is the maximal size of a shattered set $\unlabeledTrainingData{} \subset \inputSpace{}$.

Readers may be aware of shattering in the context of Vapnik–Chervonenkis (VC) dimension \cite{vapnik1971uniform}:

 The VC-Dimension of a hypothesis class $\hypothesisClass$, denoted $\VCdimension(\hypothesisClass)$, is the maximal size of a set $\unlabeledTrainingData{} \in \inputSpace{}$ that can be shattered by $\hypothesisClass$.  For example, the set of non-homogenous linear classifiers in $\realNumbers^\dimension$ has VC-dimension $\dimension+1$.
 
 VC-dimension only applies to binary classification.  Natarajan dimension is a generalization of VC-dimension from binary classification to multiclass classification.  Indeed, when $\abs{\outputSpace{}}=2$ and classes are mutually exclusive, Natarajan dimension is VC-dimension exactly.
 
 If $\hypothesisClass$ can shatter sets of arbitrarily large size, we say that $\hypothesisClass$ has infinite Natarajan dimension.  In the binary classification setting, Sine waves in $\realNumbers$, i.e. functions of the form $\sgn\s{\sin(w\cdot x)}$ where $w\in\realNumbers$, is known to have infinite VC-dimension, hence the hypothesis class has infinite Natarajan dimension.


%
%

%



\end{shatterVC}





\subsection{Adversarial ML Background}\label{subsec:AdvMLBackground}

Consistent with the randomized smoothing literature~\cite{DBLP:conf/icml/CohenRK19,DBLP:conf/sp/LecuyerAG0J19}, we address an $\ell_p$-norm bounded adversary which aims to change the classification output. The adversary searches for a perturbation, $\perturbation \in \constantClassificationSet(\sample)$, such that: $\hyp(\sample + \perturbation) \neq \hyp(\sample)$. The set $\constantClassificationSet(\sample)$ is often of the form $\constantClassificationSet(\sample) \definedAs \s{\vec{\placeholder} : \norm{\vec{\placeholder}-\sample}_p \leq \epsilon}$. That is to say $\constantClassificationSet(\sample)$ is some  $\ell_p$-norm bounded set of adversarial perturbations.

Adversarial loss refers to the setting in which risk is measured with respect to inputs perturbed by some adversary.  Given a loss function $\loss$ and $\constantClassificationSet(\sample)$, the adversarial loss for a hypothesis $\hyp$ is denoted by $\adversarialRiskFunc_{\loss}(\hyp)$:
\begin{equation}
    \adversarialRiskFunc_{\loss}(\hyp) \definedAs \E[(\sample,\labelElement{})\drawnFrom \sampleDist]{\max_{\perturbation \in \constantClassificationSet(\sample)} \loss(\hyp(\sample + \perturbation),\labelElement{})}
\end{equation}

Contrast $\adversarialRiskFunc_{\loss}(\hyp)$ with the natural loss $\naturalRiskFunc_{\loss}(\hyp)$:
\begin{equation}
    \naturalRiskFunc_\loss(\hyp) \definedAs \E[(\sample,\labelElement{})\drawnFrom \sampleDist]{ \loss(\hyp(\sample),\labelElement{})}
\end{equation}

A classifier is said to be $\constantClassificationSet(\sample)$-robust if for any input $\sample$, one can obtain a guarantee that the classifier's prediction is constant within $\constantClassificationSet(\sample)$.  That is, given a set $\constantClassificationSet(\sample)$, the classifier $\hyp$ is robust at $\sample$ if, for all $\vec{\placeholder} \in \constantClassificationSet(\sample)$, $\hyp(\sample) = \hyp(\vec{\placeholder})$.  








\section{Roadmap and Key Results}
Before going into the details of our argument, we will provide a roadmap of our paper and a brief overview of our main theorems and key results. In this paper, we present two novel results: 

\begin{itemize}
    \item There exist distributions for which a learned classifier may have $0\%$ error prior to smoothing, but after smoothing, the same classification task is no longer learnable with $0\%$ error.
    
    \item  Natural accuracy can suffer when randomized smoothing is applied to a classifier.
\end{itemize}


To establish the first result, in~\cref{sec:smoothing_def}, we define a novel generalized formulation of randomized smoothing. Utilizing that general formulation, we show in~\cref{thm:impossible} that randomized smoothing may, depending on the underlying sample distribution $\sampleDist{}$, impact the learnability on a training set.  Predictions rendered by randomized smoothing, even predictions made on examples in the training set, depend directly on the choice of noise distribution $\noiseDistParametrized{}$ and the geometry of underlying sample distribution $\sampleDist{}$, but $\sampleDist{}$ is unknown.

Given randomized smoothing has such a dependence on $\sampleDist{}$, we instead analyze noise augmentation.  Noise augmentation is a technique which serves as a proxy for randomized smoothing. Predictions rendered by a classifier trained with noise augmentation instead have a direct dependence on training data $\trainingData{}$ and choice of noise distribution $\noiseDistParametrized{}$.  Noise augmentation only depends on sample distribution $\sampleDist{}$ insofar as $\trainingData{}$ is sampled from $\sampleDist{}$.  Though not mentioned explicitly by name, the noise augmented training procedure frequently appears in randomized smoothing literature ~\cite{DBLP:conf/sp/LecuyerAG0J19, DBLP:conf/icml/CohenRK19}.  This may be because of the reasons mentioned above.


Before arriving at the aforementioned~\cref{thm:impossible}, we further discuss the relationship between randomized smoothing and noise augmentation in~\cref{subsec:smoothingAndNoiseAugmentation}. A small illustrative example of noise augmentation is provided in~\cref{subsec:motivatingExample}.   

The majority of our theory is developed in~\cref{sec:labelClustering} and its implications are discussed in~\cref{sec:implications}. We invoke the Natarajan dimension, a multi-class generalization of Vapnik-Chervonenkis Dimension (VC-Dimension), when discussing the size of our hypothesis classes.  In~\cref{thm:threshold}, we show that beyond a certain noise threshold $\scalarParameterVar{}$, the classification rendered by a model trained with noise augmentation may not provide $0\%$ error on the training set.  We interpret this to mean {\it hypotheses realizable prior to randomized smoothing are not realizable after randomized smoothing}.

After stating our theoretical results, we conduct an extensive experimental evaluation of our claims from~\cref{sec:labelClustering,sec:implications} in~\cref{sec:eval}.  Further details regarding our experiments, including our choice of setup and the design can be found in the experimental section.



\section{Revisiting Randomized Smoothing}
\label{sec:smoothing_def}

Making a classifier more robust to adversarial perturbation has been the subject of recent research. A popular approach to providing computationally inexpensive robustness guarantees is through \emph{randomized smoothing}~\cite{DBLP:conf/icml/CohenRK19,DBLP:conf/sp/LecuyerAG0J19}. These approaches smooth a classifier by sampling its output from a noisy input distribution. The procedure typically involves sampling from a noise distribution centered about the input example, passing an ensemble of such noisy samples to the classifier, then returning the label which the classifier deems to be most probable. 

The randomized smoothing operation yields classifiers which are  $\constantClassificationSet(\sample)$-robust for all $\sample \in \unlabeledTrainingData{}$. In what follows, we formalize a generalized procedure for randomized smoothing. We utilize this formulation to study the impact of randomized smoothing on the feasible set of realizable hypotheses.

\begin{noiseDistribution}
Critical to the randomized smoothing procedure is the noise distribution, $\noiseDistInstance{\mean{}}$. Cohen et al.~\cite{DBLP:conf/icml/CohenRK19} require the distribution to follow an isotropic Gaussian distribution over $\realNumbers^\dimension$. L{\'{e}}cuyer et al.~\cite{DBLP:conf/sp/LecuyerAG0J19} require a joint distribution of $\dimension$ independent samples from a single-variable Laplace Distribution. In this paper, we provide a generalized notion of randomized smoothing which subsumes the noise distributions of L{\'{e}}cuyer et al.~\cite{DBLP:conf/sp/LecuyerAG0J19} and Cohen et al.~\cite{DBLP:conf/icml/CohenRK19}. Before stating the noise distribution, we provide the definition of quasiconcavity:

\begin{definition}
A function $\placeholderFuncAltAlt : \inputSpace \to \realNumbers$ defined a convex subset $U$ of $\realNumbers^\dimension$ is said to be quasiconvex if for all $\sample,\sample' \in U$, and all $\lambda \in [0,1]$,
\begin{equation}
    \placeholderFuncAltAlt(\lambda\sample + (1-\lambda)\sample') \leq \max\s{\placeholderFuncAltAlt(\sample),\placeholderFuncAltAlt(\sample')}
\end{equation}
\end{definition}

In particular, we define a generalized noise distribution as follows. Let $\vec{\placeholder} \in \realNumbers^\dimension$;  $\noiseDistInstance{\mean{}}(\vec{\placeholder})$ is a $\dimension$-dimensional probability density function with the following properties:  
\begin{enumerate}[label=(\roman*)]
    \item Let $\vec{\placeholder} \drawnFrom \noiseDistInstance{\mean{}}$, then $\mean{} = \E{\vec{\placeholder}}$
    \item $\mean{} = \arg\max_{\vec{\placeholder}}\noiseDistInstance{\mean{}}(\vec{\placeholder})$. That is to say, $\mean{}$ is the statistical mode of $\noiseDistInstance{\mean{}}(\vec{\placeholder})$.
    \item $\noiseDistInstance{\mean{}}(\vec{\placeholder}) = \prod_{i=1}^\dimension \noiseDistInstance{{\mu_{i}}}(\placeholder_{i})$ where $\noiseDistInstance{\mu_i}(\placeholder_i) = e^{-\placeholderFunc\p*{\frac{\placeholder_i}{\scalarParameter}}}$ is a symmetric, quasi-concave, non-degenerate function of $\placeholder_i$ from $\realNumbers$ to $(-\infty,+\infty]$ and $\scalarParameter$ is a positive real number.  In other words, $\noiseDistInstance{\mean{}}(\vec{\placeholder})$ is separable and symmetric.
    \item The noise distribution $\noiseDistInstance{\mean{}}$ has measure one.  That is, $1=\int_{\vec{\placeholder} \in \inputSpace{}}\noiseDistInstance{\mean{}}(\vec{\placeholder})\mathrm{d}\vec{\placeholder}$
\end{enumerate}

In \Cref{tab:noiseDistributionExamples}, we show how several common 1-D noise distributions translate into the general noise distribution $\noiseDistInstance{\mu}$.  When translating distributions in more than $1$ dimension, recall that we assumed $\noiseDistInstance{\mean{}}$ to be separable.


\begin{table}[bt]
\caption{Lookup Table for Normal and Laplace Distributions}
\label{tab:noiseDistributionExamples}
\centering
\resizebox{0.6\linewidth}{!}{\begin{tabular}{ccccc}
\toprule
Distribution & \multicolumn{1}{c}{$\func_{X}(x)$} & $\placeholderFunc(\placeholder)$ & $\scalarParameter$ & Comment \\
\midrule
 \text{Normal} & $\frac{1}{\sqrt{(2\pi)\tau^2}}e^{-\frac{1}{2\tau^2}\p*{x-\mu}^2}$ & $\placeholder^2$ & $\sqrt{2\tau}$ & $\standardDeviation \in \realNumbers_+$\\ 
 \text{Laplace} & $\frac{1}{2b}e^{-\frac{\abs{x-\mu}}{b}}$ & $\abs{\placeholder}$ & $b$ & $b \in \realNumbers_+$\\
\bottomrule
\end{tabular}}
\end{table}

Moving forward, we will adopt the following notation convention: $\noiseDistInstance{\zerosDimension}$ will be written as $\noiseDistParametrized$.  When $\mean{} \neq \zerosDimension$, we write  $\noiseDistInstance{\mean{}}$.

\end{noiseDistribution}

The choice of $\noiseDistParametrized{}$ has a direct effect on the geometry of $\constantClassificationSet(\sample)$.  For example, it has been shown that choosing $\noiseDistParametrized{}$ to be the isotropic multivariate normal distribution yields classifiers certifiably robust in $\ell_2$-norm~\cite{DBLP:conf/icml/CohenRK19}.






 \subsection{Randomized Smoothing and Noise Augmentation}\label{subsec:smoothingAndNoiseAugmentation}

We begin by stating the definition of randomized smoothing:

\begin{definition}[Randomized Smoothing]
Given a fixed hypothesis class $\hypothesisClass : \inputSpace \to \outputSpace{}$, a noise distribution $\noiseDistParametrized$, and training data $\trainingData{}$, randomized smoothing yields $\hypContinuous^{\opt}_{\noiseDistParametrized}$:
\begin{align}
    \hypContinuous^{\opt}_{\noiseDistParametrized}\definedAs \arg\max_{\labelElement{} \in \outputSpace}&\quad\Pr{\hyp^{\opt}(\sample + \noiseInstance)=\labelElement{}} \\
    \text{where} &\quad \noiseInstance \drawnFrom \noiseDistParametrized{} \\
    &\quad \hyp^{\opt} = \learningAlgorithm_{\hypothesisClass}(\trainingData{}) \label{eq:targetSmoothingReplacement}
\end{align}
\end{definition}

The smoothed hypothesis, $\hypContinuous^{\opt}_{\noiseDistParametrized}$, results from  performing the smoothing operation  on $\hyp^{\opt}$ where $\hyp^{\opt} \definedAs \learningAlgorithm_{\hypothesisClass,\loss_{\zeroOne}}(\trainingData{})$. We note the difference between our formulation of randomized smoothing and that of Cohen et al \cite{DBLP:conf/icml/CohenRK19}: Whereas they consider randomized smoothing on an arbitrary classifier, we analyze randomized smoothing on the classifier which as $0$ true risk.  As such, our data distribution $\sampleDist{}$, is a distribution over $\inputSpace{} \times \outputSpace{}$.

To precisely understand $\hypContinuous^{\opt}_{\noiseDistParametrized}$, we would need access to $\hyp^*(\sample)$ for all $\sample \in \inputSpace$, but clearly, $\unlabeledTrainingData{}$ is a strict subset of $\inputSpace$.  We further discuss the realizability of classifiers under the randomized smoothing operation as a consequence of the a No-Free-Lunch theorem \cite{shalev2014understanding}. We also discuss the learnability of classifiers when the training process utilizes noise augmentation.  Often, the empirical evaluation of randomized smoothing typically requires a preceding  training procedure~\cite{DBLP:conf/icml/CohenRK19,DBLP:conf/sp/LecuyerAG0J19} with noise augmentation. 
In this paper, we explore, through a statistical-learning theoretic lens, the full randomized smoothing procedure, which includes noise augmentation.





Let us now formally introduce the noise augmented hypothesis:

\begin{definition}[Noise Augmented Hypothesis]\label{def:noiseAugment}
Consider a hypothesis $\hyp \in \hypothesisClassRestricted$, and denote 
\begin{equation}
    \hyp_{\noiseDistParametrized}(\sample) \definedAs \arg\max_{\labelElement{} \in \outputSpace{}} \sum_{\sample' \in \unlabeledTrainingData{}~:~\hyp(\sample')=\labelElement{}} \hyp(\sample')\cdot \noiseDistParametrized{}(\sample-\sample')
\end{equation}
 then $\hyp_{\noiseDistParametrized}(\sample)$ is the noise augmented, by distribution $\noiseDistParametrized$, version of hypothesis $\hyp$ where $\arg\max$ is only defined if there is
a unique maximum.
    
    The noise augmented hypothesis class, $\hypothesisClassSmooth$ is now defined:
    \begin{equation}
        \hypothesisClassSmooth \definedAs \s{\hyp_{\noiseDistParametrized} : \hyp \in \hypothesisClassRestricted}
    \end{equation}
    
\end{definition}

We take sample $\sample_i$ \emph{influences} sample $\sample_j$ to mean $\noiseDistParametrized{}(\sample_j-\sample_i)>0$.


Let us discuss the motivation for our definition of $\hyp_{\noiseDistParametrized}(\sample)$: Suppose the label for any sample in $\inputSpace{}$ is deterministic: For any $(\sample, \labelElement{}) \in \trainingData{}$, we have $\Pr[(\sample,\labelElement{})\drawnFrom \sampleDist]{\outputSpace{} = \labelElement{} \given \inputSpace{} = \sample} = 1$. 
We remark upon the learnability of certain hypothesis classes under randomized smoothing in the context of \cref{thm:impossible}, the No-Free-Lunch Theorem of Shalev-Shwartz and Ben-David \cite{shalev2014understanding}.

\begin{theorem}[No-free-lunch theorem \cite{shalev2014understanding}]\label{thm:impossible}
Let $\learningAlgorithm{}$ be any learning algorithm for the task of binary classification with respect to the $0-1$ loss $\loss_{\zeroOne}$ over a domain $\inputSpace{}$.  Let the number of training samples $\numSamples$ be any number smaller than $\frac{\abs{\inputSpace{}}}{2}$.  Then, there exists a distribution $\sampleDist{}$ over $\inputSpace{}\times\s{0,1}$ such that
\begin{enumerate}
    \item There exists a function $\predictionRule: \inputSpace \to \s{0,1}$ with $\trueRisk(\predictionRule)=0$.
    \item With probability of at least $\sfrac{1}{7}$ over the choice of $\trainingData{}$ sampled from $\sampleDist$, we have that $\trueRisk(\learningAlgorithm(\trainingData{}))\geq \sfrac{1}{8}$
\end{enumerate}
\end{theorem}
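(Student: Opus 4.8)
This is the classical No-Free-Lunch theorem, so the plan is to reproduce its probabilistic-method argument: rather than naming one bad distribution outright, I would exhibit a finite family of candidates and show that the learner must fail badly on at least one of them, by averaging. First I would use $\numSamples < \abs{\inputSpace}/2$ to fix a set $C \subseteq \inputSpace$ with $\abs{C} = 2\numSamples$, and let $f_1,\dots,f_T$ (with $T = 2^{2\numSamples}$) enumerate all functions $C \to \s{0,1}$, each extended to $\inputSpace$ arbitrarily (say, by $0$ off $C$). For each $k$ let $\sampleDist_k$ be the distribution that draws $\sample$ uniformly on $C$ and outputs label $f_k(\sample)$; then $f_k$ already witnesses part (1), since $L_{\sampleDist_k}(f_k)=0$. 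So it remains to produce one $\sampleDist_k$ for which part (2) holds.

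The core of the argument is to bound the \emph{average} over $k$ of the expected population error, since a maximum dominates an average: I would show $\frac1T\sum_{k=1}^T \E[\trainingData{}\drawnFrom\sampleDist_k^{\numSamples}]{L_{\sampleDist_k}(\learningAlgorithm(\trainingData{}))} \ge \frac14$. A draw of $\numSamples$ instances from any $\sampleDist_k$ is just a length-$\numSamples$ sequence over $C$, with labels then fixed by $f_k$; since everything in sight is finite I can swap the sum over $k$ with the sum over instance-sequences, so it suffices to show that for every fixed instance-sequence $\bar{\sample}$, the $k$-average of $L_{\sampleDist_k}(\learningAlgorithm(\bar{\sample}\text{ labelled by }f_k))$ is at least $\frac14$. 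Now $\bar{\sample}$ touches at most $\numSamples$ points of $C$, so there are $p\ge\numSamples$ ``unseen'' points; dropping the seen ones, $L_{\sampleDist_k}(h)\ge\frac1{2\numSamples}\sum_{v\text{ unseen}}\indicatorFunction\s{h(v)\ne f_k(v)}$. The decisive trick: for a fixed unseen $v$, pair up the functions $f_k$ that agree on all of $C$ except at $v$; both members of a pair hand $\learningAlgorithm$ literally the same labelled sample (as $v\notin\bar{\sample}$), so $\learningAlgorithm$ returns the same $h$, which then errs on $v$ for exactly one member of the pair. Hence the $k$-average of $\indicatorFunction\s{h(v)\ne f_k(v)}$ equals $\frac12$ for each unseen $v$, and summing over the $p\ge\numSamples$ unseen points gives the $k$-average of $L_{\sampleDist_k}$ at least $\frac1{2\numSamples}\cdot p\cdot\frac12\ge\frac14$, as claimed. (If $\learningAlgorithm$ is randomized I would carry an extra expectation over its internal coins throughout; nothing changes.)

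Finally I would turn this into the probability statement: the averaged bound yields some $k^\star$ with $\E[\trainingData{}\drawnFrom\sampleDist_{k^\star}^{\numSamples}]{L_{\sampleDist_{k^\star}}(\learningAlgorithm(\trainingData{}))}\ge\frac14$, and since that random variable lies in $[0,1]$, a reverse-Markov inequality --- $\mu\le(1-a)+a\,\Pr{Z>1-a}$ with $a=\frac78$ --- gives $\Pr{L_{\sampleDist_{k^\star}}(\learningAlgorithm(\trainingData{}))\ge\frac18}\ge\frac{1/4-1/8}{7/8}=\frac17$, which is exactly part (2) for $\sampleDist=\sampleDist_{k^\star}$. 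I expect the delicate step to be the pairing/cancellation bound inside the average: making rigorous that the learner is genuinely blind on the unseen coordinates, and keeping the counting tight --- $p\ge\numSamples$ unseen points against the $\frac1{2\numSamples}$ weighting --- so that the per-point expected error of $\frac12$ aggregates to precisely the $\frac14$ that the final reverse-Markov step needs.
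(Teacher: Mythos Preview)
Your proposal is correct and is precisely the standard argument from Shalev-Shwartz and Ben-David, which is where the paper sources the statement. Note, however, that the paper does not supply its own proof of \cref{thm:impossible}: it is stated as a cited result from \cite{shalev2014understanding} and used as a black box (the multiclass \cref{cor:impossibleMultiClass} is likewise stated without proof). So there is no in-paper argument to compare against; your write-up simply fills in what the paper deliberately omits.
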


We now generalize \cref{thm:impossible} to the multi-class setting in \cref{cor:impossibleMultiClass}.  With \cref{thm:impossible} at our disposal, the corollary is straightforward to show, and we omit its proof.  

\begin{corollary}\label{cor:impossibleMultiClass}
Let $\learningAlgorithm{}$ be any learning algorithm for the task of binary classification with respect to the $0-1$ loss $\loss_{\zeroOne}$ over a domain $\inputSpace{}$.  Let the number of training samples $\numSamples$ be any integer smaller than $\frac{\abs{\inputSpace{}}}{2}$, and let the number of classes $\abs{\outputSpace}$ be any integer larger than $1$.  Then, there exists a distribution $\sampleDist{}$ over $\inputSpace{}\times\outputSpace{}$ such that
\begin{enumerate}
    \item There exists a function $\predictionRule: \inputSpace \to \s{0,1}$ with $\trueRisk(\predictionRule)=0$.
    \item With probability of at least $\sfrac{1}{7}$ over the choice of $\trainingData{}$ sampled from $\sampleDist$, we have that $\trueRisk(\learningAlgorithm(\trainingData{}))\geq \sfrac{1}{8}$
\end{enumerate}
\end{corollary}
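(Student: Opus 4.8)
The plan is to derive \cref{cor:impossibleMultiClass} as an essentially immediate consequence of the binary No-Free-Lunch theorem (\cref{thm:impossible}): I would embed a hard binary instance inside the larger label set and collapse any multi-class learner to a binary one, so that the binary failure guarantee transfers directly.

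First I would use $\abs{\outputSpace{}} \ge 2$ to relabel two elements of $\outputSpace{}$ as $0$ and $1$, so that $\s{0,1} \subseteq \outputSpace{}$. Given the learning algorithm $\learningAlgorithm{}$, I would define the projection $\pi : \outputSpace{} \to \s{0,1}$ by $\pi(0)=0$ and $\pi(c)=1$ for every $c \neq 0$, and let $\learningAlgorithm'$ be the binary learner that on input $\trainingData{}$ returns $\pi \circ \learningAlgorithm(\trainingData{})$. Applying \cref{thm:impossible} to $\learningAlgorithm'$ over the domain $\inputSpace{}$ — the hypothesis $\numSamples < \abs{\inputSpace{}}/2$ is exactly the one we are handed — produces a distribution $\sampleDist{}$ over $\inputSpace{} \times \s{0,1} \subseteq \inputSpace{} \times \outputSpace{}$ and a function $\predictionRule : \inputSpace{} \to \s{0,1}$ with $\trueRisk(\predictionRule)=0$ such that $\trueRisk(\learningAlgorithm'(\trainingData{})) \ge \sfrac{1}{8}$ with probability at least $\sfrac{1}{7}$ over $\trainingData{} \drawnFrom \sampleDist{}$. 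Conclusion~1 of the corollary is then immediate, since $\predictionRule$ already maps into $\s{0,1}$ and $\sampleDist{}$ is supported on $\inputSpace{} \times \s{0,1}$, so $\trueRisk(\predictionRule)$ is unchanged whether read in the binary or the multi-class sense.

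It then remains to transfer the failure guarantee from $\learningAlgorithm'$ to $\learningAlgorithm{}$. Because $\sampleDist{}$ places all its mass on $\inputSpace{} \times \s{0,1}$, sampling $\trainingData{} \drawnFrom \sampleDist{}$ is literally the same experiment in either setting, so the $\sfrac{1}{7}$-probability event is the same; I just need to compare true risks on that event. Writing $h = \learningAlgorithm(\trainingData{})$, I would establish the pointwise inequality $\indicatorFunction\s{h(\sample) \neq \labelElement{}} \ge \indicatorFunction\s{\pi(h(\sample)) \neq \labelElement{}}$ for every $(\sample,\labelElement{})$ with $\labelElement{} \in \s{0,1}$: when $\labelElement{}=0$ the two sides coincide because $\pi(c)=0 \iff c=0$, and when $\labelElement{}=1$ the right side is $\indicatorFunction\s{h(\sample)=0}$, which never exceeds $\indicatorFunction\s{h(\sample)\neq 1}$ (emitting a third label costs the learner as much as emitting the label $1$, and the adversary no more). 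Taking expectations over $\sampleDist{}$ gives $\trueRisk(\learningAlgorithm(\trainingData{})) \ge \trueRisk(\pi \circ \learningAlgorithm(\trainingData{})) = \trueRisk(\learningAlgorithm'(\trainingData{})) \ge \sfrac{1}{8}$, which is Conclusion~2.

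The hard part — indeed the only step that is not pure bookkeeping about renaming a finite label set — is precisely this last comparison: a multi-class learner may output labels outside $\s{0,1}$, so one must verify that collapsing those outputs via $\pi$ can only help the learner (equivalently, never help the adversary) when the true label is confined to $\s{0,1}$. Once that pointwise loss bound is in hand, the zero-risk witness and the $\sfrac{1}{7}$-probability bad event both carry over verbatim, which is why the paper can omit the details.
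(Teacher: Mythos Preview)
Your proposal is correct: the reduction via the projection $\pi$ and the pointwise loss comparison $\indicatorFunction\s{h(\sample)\neq\labelElement{}} \ge \indicatorFunction\s{\pi(h(\sample))\neq\labelElement{}}$ for $\labelElement{}\in\s{0,1}$ is exactly the natural way to derive the multi-class statement from \cref{thm:impossible}. The paper itself omits the proof entirely, remarking only that it is ``straightforward to show'' given \cref{thm:impossible}, so there is no alternative argument to compare against; your write-up simply fills in the details the authors left to the reader.
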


Both \cref{thm:impossible} and its multiclass generalization, \cref{cor:impossibleMultiClass}, suggest the difficulty of learning from an unknown distribution $\sampleDist{}$. Given a noise distribution $\noiseDistParametrized{}$, it is easy to construct a sample distribution $\sampleDist{}$, from which a training set $\trainingData{}$ is drawn upon which a learned classifier would have training error $0$, but after applying the randomized smoothing operation, the same classifier would have non-zero training error.  This suggests hypotheses realizable on training data prior to smoothing may not be realizable after randomized smoothing.  

While the randomized smoothing operation does not guarantee realizability of all hypotheses when $\scalarParameter$ is less than a critical threshold $\scalarParameterVar$,  we show in \cref{sec:implications} that a proxy for randomized smoothing does yield realizable hypotheses in exactly the same setting.  

The realizability of this proxy on the training data has a dependency on $\scalarParameter$, and only depends on $\sampleDist{}$ insofar as $\trainingData{}$ is drawn from $\sampleDist{}$. Precisely, the proxy for $\predictionRule$ to which we are referring is a convolution of the noise distribution with the samples we do have access to.  Compared to randomized smoothing, to articulate a decision rule learned from the noise augmented hypothesis class, the learner does not require access to distribution $\sampleDist$.  Therein lies the motivation for the analysis of the noise augmented hypothesis class as a proxy for randomized smoothing.


\subsection{Motivating Example}\label{subsec:motivatingExample}
\newcommand{\demoEntry}[3]{\p[\Big]{\inlineMatrix{#1 \\ #2}, #3}}
In \cref{fig:demo}, we provide a binary classification example in 2-D which motivates this paper. Consider
$$
\trainingData{} = \left\{\begin{array}{c c c}
\demoEntry{0}{\sfrac{1}{2}}{-1}, & \demoEntry{-1}{\sfrac{1}{2}}{+1}, & \demoEntry{\sfrac{1}{4}}{\sfrac{1}{2}}{+1}, \\[0.5em]
\demoEntry{\sfrac{1}{2}}{\sfrac{5}{2}}{+1}, & \demoEntry{-\sfrac{13}{10}}{-\sfrac{17}{10}}{+1}, & \demoEntry{\sfrac{1}{2}}{-1}{+1}, \\[0.5em]
\demoEntry{\sfrac{1}{20}}{\sfrac{5}{2}}{+1}, & \demoEntry{-2}{-\sfrac{17}{10}}{+1}, & \demoEntry{1}{0}{+1}
\end{array}\right\}
$$
where only the first point is labeled with $-1$,
and let $\noiseDistInstance{\mean{}}$ be the uniform distribution: 

\begin{equation}\label{eq:demoNoise}
    \noiseDistInstance{\mean{}}(\placeholder)\definedAs\begin{cases} 
    \frac{1}{\scalarParameter^2\pi} & \norm{\vec{\placeholder}-\mean{}}_2 < \scalarParameter \\
    0 & \text{otherwise}
    \end{cases}
\end{equation}

Statistical dispersion of this distribution is tuned by $\scalarParameter$.  In \cref{fig:demo}, we plot $\trainingData{}$.  Samples labeled with $+1$ are marked with a $+$. Samples labeled with $-1$ are marked with a $-$.  We plot the hypothesis $\hyp^{\opt}_{\noiseDistParametrized{}} \in \hypothesisClassSmooth$ which minimizes empirical risk $\empiricalRisk$.  That is, we are plotting $\arg\min_{\hyp \in \hypothesisClassSmooth}\empiricalRisk(\hyp)$. With color blue, we shade the region of $\realNumbers^2$ in which all points are classified as $+1$.  Color red shades the $-1$ region.  In color off-white, the region in which there exists no influence from any point in $\trainingData{}$.  In color gray, we denote the region in which points are influence by both classes, but the net result is an undefined classification.  

Let us now discuss what occurs when $\scalarParameter$ changes. For $\scalarParameter = \frac{1}{2^3}$, we see the hypothesis class $\hypothesisClass_{\unlabeledTrainingData{},\noiseDist_{\frac{1}{2^3}}}$ can realize $\trainingData{}$.  That is, we can achieve training error $0$.

For $\scalarParameter = \frac{1}{2^1}$, we see the hypothesis class $\hypothesisClass_{\unlabeledTrainingData{},\noiseDist_{\frac{1}{2^1}}}$ can no longer realize $\trainingData{}$. It is apparent the classification of individual samples by $\hyp^{\opt}_{\noiseDist_{\frac{1}{2^1}}}=\arg\min_{\hyp \in \hypothesisClass_{\unlabeledTrainingData{},\noiseDist_{\frac{1}{2^1}}}} \empiricalRisk(\hyp)$ is influenced by the labels of samples near each other.  There exist regions in which points are classified $+1$; however, no samples in $\trainingData{}$ with label $-1$ are classified as $-1$.

For $\scalarParameter = \frac{1}{2^{-1}}$, we see a significant disparity between the the empirical risk of hypotheses $\hyp^{\opt}$ and $\hyp^{\opt}_{\noiseDist_{\frac{1}{2^{-1}}}}=\arg\min_{\hyp \in \hypothesisClass_{\unlabeledTrainingData{},\noiseDist_{\frac{1}{2^{-1}}}}} \empiricalRisk(\hyp)$. There exist regions in which points are classified $+1$; however, no samples in $\trainingData{}$ with label $-1$ are classified as $-1$.

Across all three values of $\scalarParameter$, we notice a trend which we state in informally: When $\abs{\hypothesisClassRestricted}$ decreases, the accuracy associated with the most accurate classifier in $\hypothesisClassRestricted$ also seems to decrease.  

\begin{figure*}[htbp]
\centering
\hspace{-1em}
\subfigure[$\scalarParameter=\frac{1}{2^{3}}$]{
    \label{fig:demo/1}
    \includegraphics[width=0.32\linewidth]{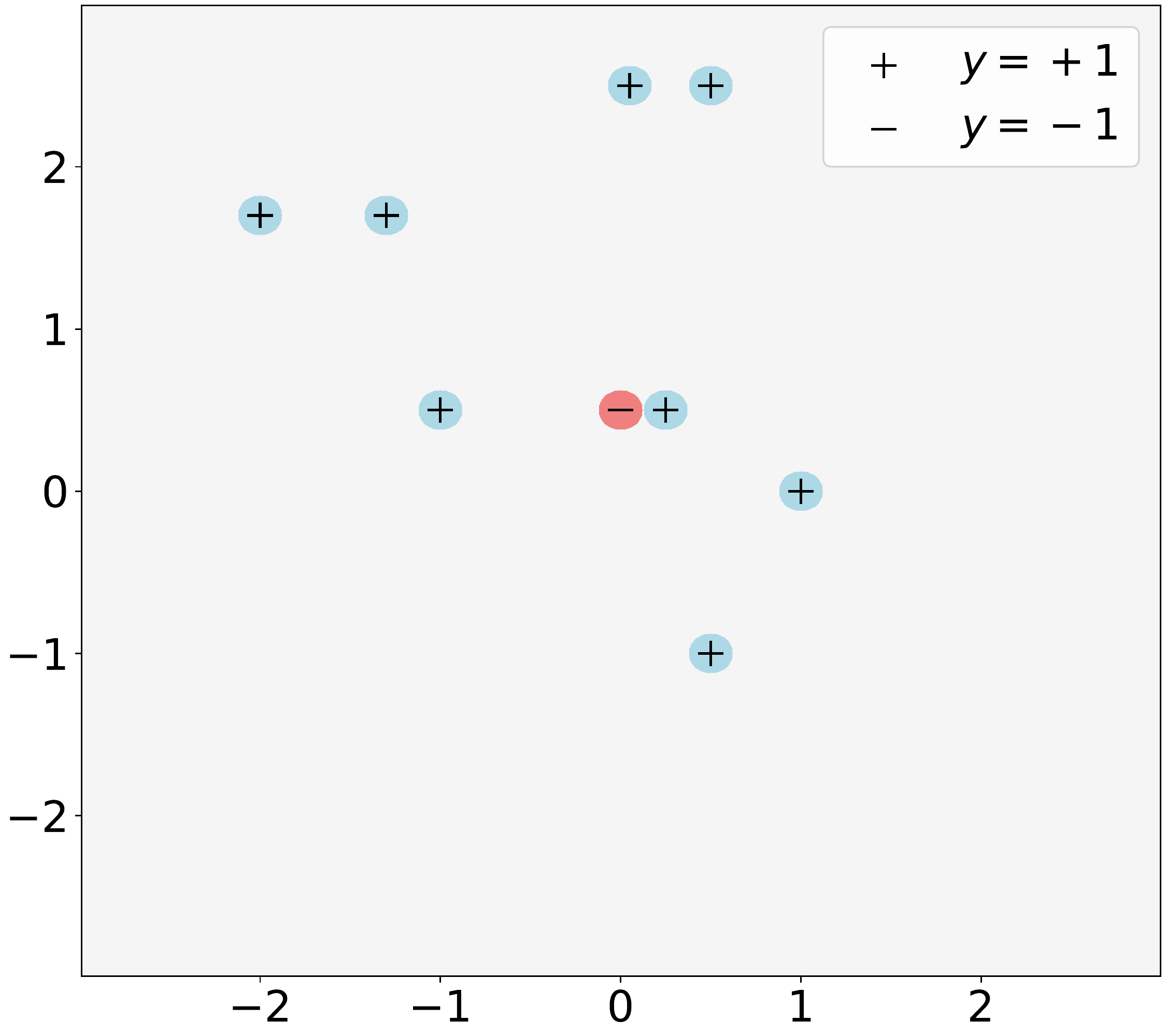}
}
\subfigure[$\scalarParameter=\frac{1}{2^{1}}$]{
    \label{fig:demo/2}
    \includegraphics[width=0.284\linewidth]{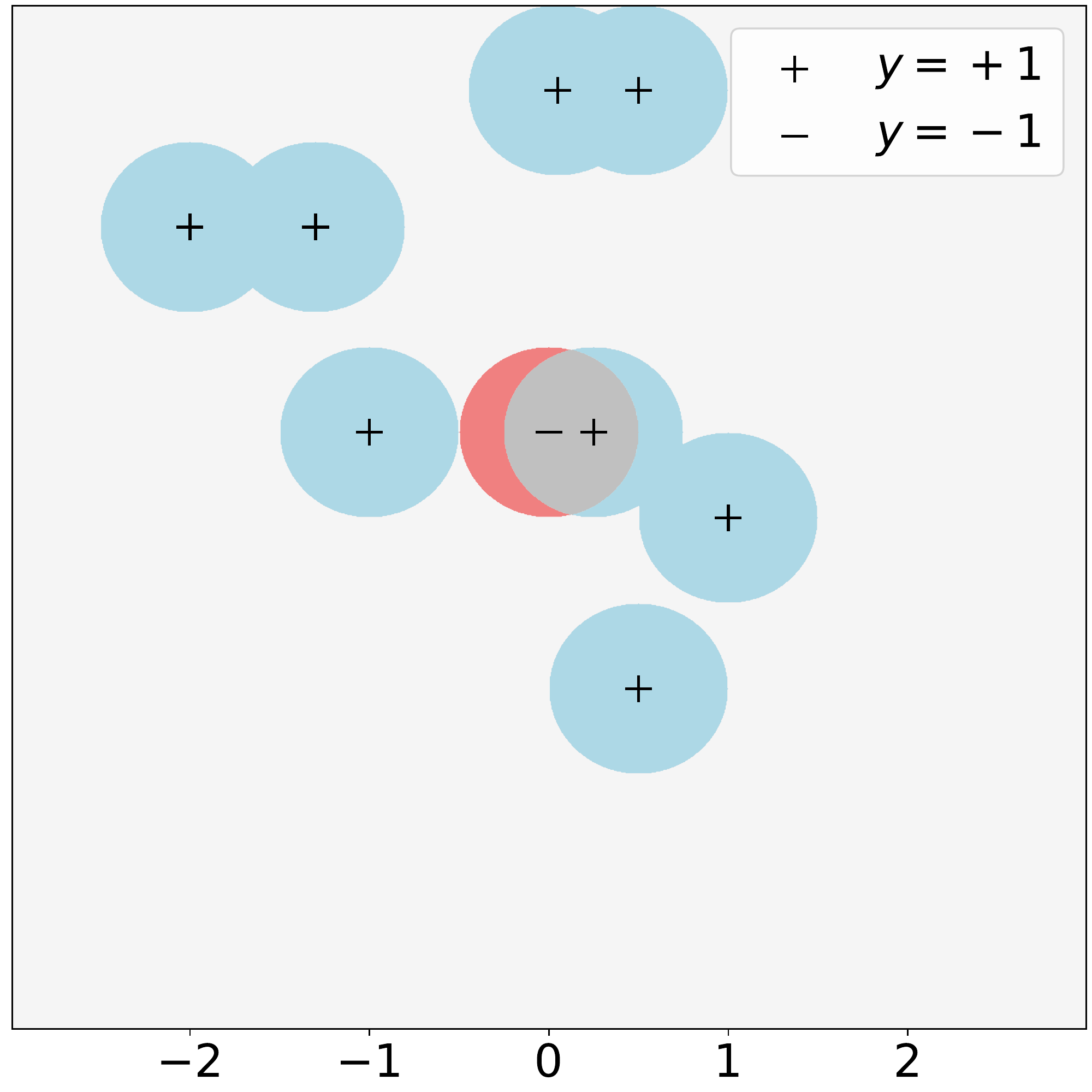}
}
\subfigure[$\scalarParameter=\frac{1}{2^{-1}}$]{
    \label{fig:demo/3}
    \includegraphics[width=0.32\linewidth]{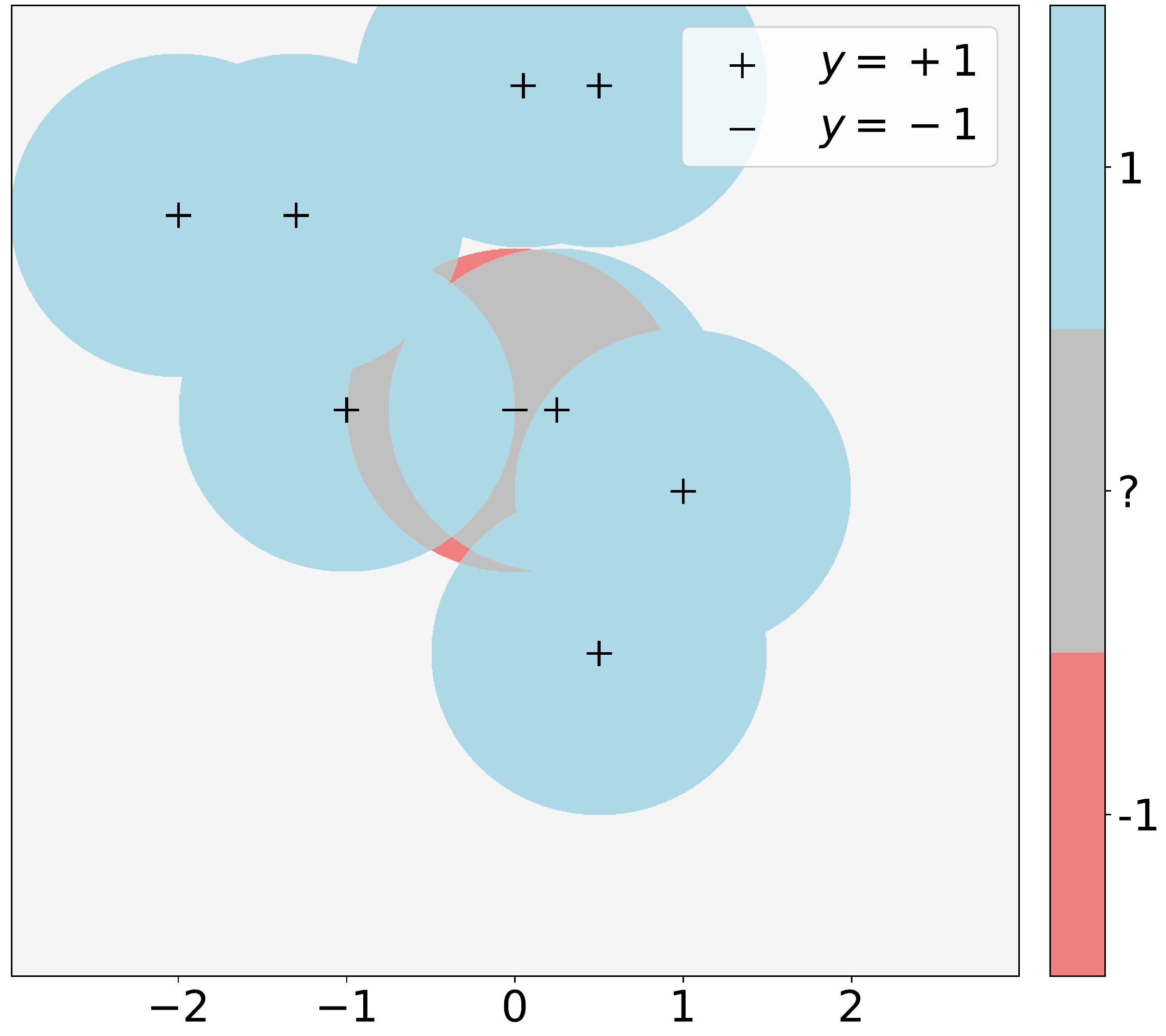}
}
\hspace{-1em}
\caption{For the $\noiseDistInstance{\mean{}}$ specified in \cref{eq:demoNoise}, we plot for $\sgn\s{\sum_{\sample' \in \unlabeledTrainingData{}} \hyp(\sample')(\delta \convolutionOperator \noiseDistInstance{\mean{}})}(\sample-\sample')$ for $\scalarParameter \in \s{\frac{1}{2^{3}},\frac{1}{2^{1}},\frac{1}{2^{-1}}}$.  Notice due to the noise distribution, points are classified (blue for $+1$, red for $-1$, gray for undefined) differently from original label denoted by a marker: $+$ for $+1$, $-$ for $-1$. }
\label{fig:demo}
\end{figure*}

\section{Conditions for Label Clustering}
\label{sec:labelClustering}


This section lays the groundwork necessary for reasoning about how many hypotheses in $\hypothesisClassRestricted$ are realizable after randomized smoothing. In particular, we study  how $\scalarParameter,\noiseDistParametrized$ and $\hypothesisClassSmooth$ relate. We use a counting argument to show that $\hypothesisClassSmooth$ is a subset of $\hypothesisClass_{\unlabeledTrainingData{}}$.  We discuss the implications of this result in \cref{sec:implications}.

Within \cref{sec:labelClustering,sec:implications}, we assume $\hypothesisClass$ shatters $\unlabeledTrainingData{}$ and $\numSamples \geq 3$.  This is not very restrictive: Recent empirical results suggest that even with random label assignment, deep networks can \emph{memorize} their training data. A quote by Zhang et. al.  \cite{random-labeling} put their findings succinctly:

\begin{quote}
    ...when trained on a completely random labeling of the true data, neural networks
achieve 0 training error.
\end{quote}

\subsection{Statistical Dispersion and $\hypothesisClassSmooth$}

We begin with the following theorem, our principal result,  which shows the existence of a threshold $\scalarParameterVar$ which dichotomizes $\scalarParameter$ with respect to $\abs{\hypothesisClassSmooth}$:


\begin{theorem}\label{thm:threshold}
Let $\unlabeledTrainingData{}$ be shattered by $\hypothesisClass$ and let $\abs{\unlabeledTrainingData{}} = \numSamples \geq 3$.  There exists a positive real number $\scalarParameterVar$ such that
\begin{enumerate}[label=\upshape(\roman*), ref=\thetheorem(\roman*)]
\item\label[thmenum]{item:neq}  when $\scalarParameter > \scalarParameterVar$, $\hypothesisClassSmooth \subset \hypothesisClass_{\unlabeledTrainingData{}} $ , and
    \item\label[thmenum]{item:eq} when $\scalarParameter < \scalarParameterVar$, $ \hypothesisClassSmooth= \hypothesisClass_{\unlabeledTrainingData{}}$
    
\end{enumerate}

Furthermore $\scalarParameterVar$ is the solution to 
\begin{equation} \label{eq:solveScalarParameterSmoothUpper}
\begin{split}
     \max \quad& \scalarParameter \\
    \textrm{s.t.} \quad & \s{\noiseDistInstance{\sample}(\sample) \geq \sum_{\sample' \in \s{\unlabeledTrainingData{}\setminus \sample}}\noiseDistInstance{\sample}(\sample')}  \text{ for all $\sample \in \unlabeledTrainingData{}$}
    \end{split}
\end{equation}
\end{theorem}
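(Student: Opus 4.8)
The plan is to analyze the noise-augmented hypothesis $\hyp_{\noiseDistParametrized}$ pointwise on the training set $\unlabeledTrainingData{}$, and to understand exactly when the augmentation can \emph{change} a label. Fix a hypothesis $\hyp \in \hypothesisClassRestricted$ and a sample $\sample \in \unlabeledTrainingData{}$. By \cref{def:noiseAugment}, $\hyp_{\noiseDistParametrized}(\sample)$ is the label maximizing $\sum_{\sample' : \hyp(\sample') = \labelElement{}} \hyp(\sample')\cdot \noiseDistParametrized{}(\sample - \sample')$. Since $\noiseDistParametrized{}$ has its mode at $\zerosDimension$ (property (ii) of the noise distribution) and is separable and symmetric, the term $\sample' = \sample$ contributes weight $\noiseDistInstance{\sample}(\sample) = \noiseDistParametrized(\zerosDimension)$, which is the largest possible single contribution. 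The key observation is: $\hyp_{\noiseDistParametrized}(\sample) = \hyp(\sample)$ is \emph{guaranteed} for every $\hyp$ precisely when the self-weight at $\sample$ dominates the total weight of all other points, i.e. $\noiseDistInstance{\sample}(\sample) \geq \sum_{\sample' \in \unlabeledTrainingData{}\setminus\sample} \noiseDistInstance{\sample}(\sample')$; this is exactly the constraint in \eqref{eq:solveScalarParameterSmoothUpper}. Define $\scalarParameterVar$ as the supremum of $\scalarParameter$ for which this holds at all $\sample \in \unlabeledTrainingData{}$; because each $\noiseDistInstance{\mu_i}(\placeholder_i) = e^{-\placeholderFunc(\placeholder_i/\scalarParameter)}$ is continuous and monotone in $\scalarParameter$ with the ratio $\sum_{\sample'\neq\sample}\noiseDistInstance{\sample}(\sample')/\noiseDistInstance{\sample}(\sample)$ increasing from $0$ (as $\scalarParameter\to 0^+$) through $1$, such a finite positive $\scalarParameterVar$ exists and is the solution to the stated program.

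For part (ii), when $\scalarParameter < \scalarParameterVar$: I would show $\hypothesisClassSmooth = \hypothesisClass_{\unlabeledTrainingData{}}$ by showing the map $\hyp \mapsto \hyp_{\noiseDistParametrized}$ restricted to $\unlabeledTrainingData{}$ is the identity. By the dominance inequality above, for each $\sample$ and each $\hyp$, the label $\hyp(\sample)$ receives weight at least $\noiseDistInstance{\sample}(\sample) \geq \sum_{\sample'\neq\sample}\noiseDistInstance{\sample}(\sample')$, which is at least the total weight assigned to \emph{all} other labels combined; hence $\arg\max$ returns $\hyp(\sample)$ (and it is the unique maximizer when the inequality is strict, which it is for $\scalarParameter < \scalarParameterVar$). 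Therefore $\hyp_{\noiseDistParametrized}|_{\unlabeledTrainingData{}} = \hyp|_{\unlabeledTrainingData{}}$ for all $\hyp$, so $\hypothesisClassSmooth \supseteq \hypothesisClass_{\unlabeledTrainingData{}}$; the reverse inclusion $\hypothesisClassSmooth \subseteq \hypothesisClass_{\unlabeledTrainingData{}}$ is immediate since each $\hyp_{\noiseDistParametrized}$ is a function $\unlabeledTrainingData{} \to \outputSpace{}$ and, using that $\hypothesisClass$ shatters $\unlabeledTrainingData{}$ and the convention $\abs{\hypothesisClass} = \abs{\outputSpace{}}^{\numSamples}$ (every labeling is realizable), $\hypothesisClass_{\unlabeledTrainingData{}}$ already contains all such functions.

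For part (i), when $\scalarParameter > \scalarParameterVar$: I need $\hypothesisClassSmooth \subsetneq \hypothesisClass_{\unlabeledTrainingData{}}$, i.e. a \emph{strict} subset. The inclusion $\subseteq$ is as above. For strictness I exhibit one labeling that is \emph{not} in the image of the augmentation map. Since $\scalarParameter > \scalarParameterVar$, the program's constraint is violated at some $\sample^\star \in \unlabeledTrainingData{}$: $\noiseDistInstance{\sample^\star}(\sample^\star) < \sum_{\sample'\neq\sample^\star}\noiseDistInstance{\sample^\star}(\sample')$. Using $\numSamples \geq 3$, I pick $\hyp$ so that $\sample^\star$ is the unique point with its label $\labelElement{}_0$ while enough neighboring points carrying a common other label $\labelElement{}_1$ outweigh $\sample^\star$'s self-weight at $\sample^\star$ — this uses that the $\hyp(\sample')$ factors are bounded away from $0$ (labels drawn from a finite nondegenerate set) and that, by separability and symmetry of $\noiseDistParametrized{}$, one can concentrate the violating mass on a single alternative label. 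Then $\hyp_{\noiseDistParametrized}(\sample^\star) \neq \labelElement{}_0 = \hyp(\sample^\star)$, so no augmented hypothesis agrees with the target labeling that is constant-$\labelElement{}_1$ except at $\sample^\star$; equivalently, that labeling lies in $\hypothesisClass_{\unlabeledTrainingData{}}\setminus\hypothesisClassSmooth$. The main obstacle is this last step: making the counting argument robust to how the $\hyp(\sample')$ weights enter the sum and verifying that when the constraint is violated at $\sample^\star$ there is always a \emph{single} competing label whose points alone already outweigh the self-weight (rather than the violation being spread across several labels) — handling this cleanly may require choosing the noise scale just past $\scalarParameterVar$ and invoking continuity, or arguing directly from the separable form $\noiseDistInstance{\sample}(\sample') = \prod_i e^{-\placeholderFunc((\placeholder_i - \placeholder'_i)/\scalarParameter)}$ that the dominant off-diagonal contributions can be routed to one label by an appropriate choice of $\hyp$ on the shattered set.
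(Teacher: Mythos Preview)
Your approach is essentially the paper's: part (ii) via the dominance inequality showing the augmentation map is the identity on $\unlabeledTrainingData{}$, and part (i) via the ``all-but-one'' labeling at a point $\sample^{\star}$ where the constraint fails. Your stated obstacle dissolves once you observe that, since $\hypothesisClass$ shatters $\unlabeledTrainingData{}$, you may simply assign \emph{every} $\sample' \neq \sample^{\star}$ the single label $y_1$ (this is exactly the paper's $\trainingData{\marker}$), so the entire violating mass $\sum_{\sample'\neq\sample^{\star}}\noiseDistInstance{\sample^{\star}}(\sample')$ already sits on one competing label and there is nothing to spread.
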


\begin{proof}

 We will first prove \cref{item:neq}. That is, we will show if  $\scalarParameter < \scalarParameterVar$, then $\abs{\hypothesisClassSmooth} \subset \abs{\hypothesisClass_{\unlabeledTrainingData{}}}$. Because $\unlabeledTrainingData{}$ is shattered by $\hypothesisClass$, it suffices to show when $\scalarParameter > \scalarParameterVar$, $\smoothingFunction$, $\abs{\hypothesisClassSmooth} < \abs{\hypothesisClassRestricted}$.


Assume, without loss of generality, that $-1, +1\in \outputSpace{}$.  Consider training data $\trainingData{\marker}$ in which all labels $\labelElement{i}$ in $\s{\s{\labelElement{1},\hdots,\labelElement{\numSamples}} \setminus \s{\labelElement{\marker}}}$ hold value $+1$, whereas $\labelElement{\marker}$ holds value $-1$.

 Then it is the case $\noiseDistInstance{\sample}(\sample_{\marker}) < \sum_{\sample' \in \s{\unlabeledTrainingData{} \setminus \sample}}\noiseDistInstance{\sample'}(\sample_{\marker})$, it must be the case that $\hyp_{\noiseDistParametrized}(\sample_{\marker}) = +1$. As no $\hyp_{\noiseDistParametrized} \in  \hypothesisClassSmooth$ can realize $\trainingData{\marker}$ we conclude that  when $\scalarParameter > \scalarParameterVar$, $\hypothesisClassSmooth \subset \hypothesisClass_{\unlabeledTrainingData{}} $.

We now prove \cref{item:eq}: That is, we will prove if $\scalarParameter < \scalarParameterVar$, then $\abs{\hypothesisClassSmooth} = \abs{\hypothesisClassRestricted}$:

When $\scalarParameter < \scalarParameterVar$, we have that, for all $\sample \in \unlabeledTrainingData{}$, $\noiseDistInstance{\sample}(\sample) > \sum_{\sample' \in \s{\unlabeledTrainingData{}\setminus \sample}}\noiseDistInstance{\sample'}(\sample)$.

Furthermore, it is not hard to see that for all $\sample \in \unlabeledTrainingData{}$, and for any $\trainingData{} \in  \unlabeledTrainingData{} \times \outputSpaceInstance{}$, the following is true:
\begin{equation}
\noiseDistInstance{\sample}(\sample) \geq \abs{\sum_{(\sample',\labelElement{}') \in \s{\trainingData{}\setminus (\sample,\labelElement{})}}\labelElement{}'\cdot\noiseDistInstance{\sample'}(\sample)}
\end{equation}

Hence, when $\scalarParameter < \scalarParameterVar$, then $\abs{\hypothesisClassSmooth} = \abs{\hypothesisClassRestricted}$.

We also state the impact of randomized smoothing on adversarial training accuracy:

In \cref{thm:threshold}, we examined the impact of randomized smoothing on the set of realizable hypotheses within the natural accuracy setting.  In \cref{thm:thresholdAdversarial}, we conjecture on the on the interplay between adversarial examples and noise augmentation:

\begin{conjecture}\label{thm:thresholdAdversarial}
Given a noise distribution of the form $\noiseDistParametrized{}$, unlabeled training data $\unlabeledTrainingData{}$, and real number $p \geq 1$, there exists an $\eta$ such that for all $\perturbation \in \s{\perturbation : \norm{\perturbation}_p \leq \perturbationNormBound}$, $\hyp_{\noiseDistParametrized{}}(\sample) = \hyp_{\noiseDistParametrized{}}\p*{\sample + \perturbation}$.  Furthermore, $\eta$ depends on $\unlabeledTrainingData{}$, choice of noise distribution $\noiseDistParametrized{}$ and $p$.
\end{conjecture}

\end{proof}

In terms of the example shown in \cref{fig:demo}, \cref{item:eq} captures  \cref{fig:demo/1} ($\scalarParameter = \frac{1}{2^3}$), as no point in $\unlabeledTrainingData{}$ influences any other, hence $\abs{\hypothesisClassRestricted} = \abs{\hypothesisClass_{\unlabeledTrainingData{},\noiseDist_{\frac{1}{2^3}}}}$.  \Cref{item:neq} captures  \cref{fig:demo/2,fig:demo/3}: samples in $\unlabeledTrainingData{}$ influence one another.  In particular,  $\sample_*$ as defined in the proof is the point labeled $-1$.  We see that the number of samples influenced by any sample $\sample$ varies with $\scalarParameter$. 
\section{Implications on randomized smoothing}
\label{sec:implications}

Before discussing the implications of \cref{thm:threshold}, we provide the sample complexity of PAC learning, as given by  Shalev-Shwartz and Ben-David \cite{shalev2014understanding}:

\begin{pac}
Every finite hypothesis class $\hypothesisClass$ is PAC learnable with sample complexity 
\begin{equation}
    \sampleComplexity{\hypothesisClass}(\error,\confidence) \leq \ceil*{\frac{\log(\sfrac{\abs{\hypothesisClass}}{\confidence})}{\error}}
\end{equation}
where $\ceil{\cdot}$ is the ceiling function.
\end{pac}


Given a deterministic labeling function, $\predictionRule: \inputSpace{} \to \s{\pm 1}$, consider the problem of learning, with hypothesis class $\hypothesisClassSmooth$, from the discrete distribution $\queryDist \subseteq \s{\unlabeledTrainingData{}} \times \predictionRule$.  That is, consider the following risk minimization procedure  $\learningAlgorithm_{\hypothesisClassSmooth,\loss_{\zeroOne}}(\trainingData{\markerAlt})$ where $\trainingData{\markerAlt} \subseteq \queryDist$.  For any value of $\scalarParameter$, $\predictionRule$ is always Agnostic PAC learnable, but the same cannot be said for PAC learnability.

When $\scalarParameter < \scalarParameterVar$, $\predictionRule \in \hypothesisClassSmooth$, hence $\hypothesisClassSmooth$ is PAC learnable on distribution $\queryDist$:  The approximation error $\error_{\approximation}$ is $0$, that is to say assuming we picked the best hypothesis $\hyp^{\opt} \in \hypothesisClassSmooth$, then for any test set drawn from $\queryDist$, the test accuracy is $100\%$.

The setting of $\scalarParameter < \scalarParameterVar$ is more intricate.   There are two sub-cases:

\textbf{Sub-case 1}:  $\predictionRule \in \hypothesisClassSmooth$. The learner shows inductive bias by withholding hypotheses in $\hypothesisClassRestricted$ from $\hypothesisClassSmooth$, yet the learner benefits from a better sample complexity upper bound.  The approximation error $\error_{\approximation}$ remains $0$. That is to say assuming we picked the best hypothesis $\hyp^{\opt} \in \hypothesisClassSmooth$, then for any test set drawn from $\queryDist$, the test accuracy is $100\%$, and, we might reach $\hyp^{\opt}$ with fewer training samples than we needed in the case $\scalarParameter < \scalarParameterVar$.  In this case, $\hypothesisClassSmooth$ is PAC learnable on distribution $\queryDist$.

\textbf{Sub-case 2}: $\predictionRule \not\in \hypothesisClassSmooth$. Here $\predictionRule$ is no longer PAC learnable. Here, the inductive bias induced by withholding hypotheses in $\hypothesisClassRestricted$ from $\hypothesisClassSmooth$ causes the approximation error $\error_{\approximation}$ to be positive: assuming we picked the best hypothesis $\hyp^{\opt} \in \hypothesisClassSmooth$, then there exists a test set drawn from $\queryDist$, the test accuracy is not $100\%$.  In this case, $\hypothesisClassSmooth$ is not PAC learnable on distribution $\queryDist$.

A test set within the setting of sub-case 2 is easy to construct: set $\trainingData{\markerAlt}$ to be $\trainingData{\marker}$ as defined in the proof of \cref{thm:threshold}.

If a learner does not have have a priori knowledge of the prediction rule $\predictionRule$, by smoothing, the learner runs the risk of losing the ability to realize $\predictionRule$.  When $\scalarParameter > \scalarParameterVar$, the learner gains robustness, but is putting up accuracy as collateral.

Making an analogy: If setting $\scalarParameter < \scalarParameterVar$ is buying a treasury bond, then $\scalarParameter > \scalarParameterVar$ is buying penny stocks.  One is low-risk, low-reward, the other is high-risk high-reward, but instead of money, the learner is risking the ability to guarantee $100\%$ test-accuracy for both robustness and a training procedure with fewer samples.

\section{Evaluation}\label{sec:eval}

We conduct extensive experiments on noise augmentation and randomized smoothing to support our conclusions from \cref{sec:implications}. 
In particular, we design the experiments to answer three research questions:

\begin{itemize}[leftmargin=*]
\item Does training with noise augmentation affect the performance of randomized smoothing in terms of both natural and adversarial accuracy?
\item When the Gaussian distribution is used as the noise distribution $\noiseDistParametrized{}$ in both noise augmentation and randomized smoothing, how does varying the scaling parameter, $\scalarParameter$, affect performance of the learned classifier?
\item When the Gaussian distribution is used as the noise distribution $\noiseDistParametrized{}$, how does varying $\scalarParameter$ affect $\hypothesisClassSmooth$?  Our principal interest lies in examining $\abs{\hypothesisClassSmooth}$.
\end{itemize}


%


Previous approaches have studied randomized smoothing in terms of certified accuracy, natural accuracy, and robustness guarantees~\cite{DBLP:conf/icml/CohenRK19}. In this work, we build on previous results and take the novel approach of systematically studying the impact of randomized smoothing and noise augmentation on generalization and adversarial accuracy. 




\subsection{Experimental Highlights}\label{sec:exp/answers}

In this section, we will provide a brief overview to the answers of the above three questions and provide insight to the underlying experiments. The evaluation is conducted on four datasets: MNIST, CIFAR-10, GTSRB, and ImageNet.  We describe these datasets in \cref{sec:exp/setup/dataset,sec:exp/setup/network}.

\textbf{1. Does training with noise augmentation affect the performance of randomized smoothing in terms of both natural and adversarial accuracy?}
To answer this question, for each dataset, we apply randomized smoothing to two classifiers of the same architecture. One classifier is trained \emph{without} noise augmentation.  The other is trained \emph{with} noise augmentation.  For each dataset, we compare the performance of the two classifiers.
Then, we consider the models that are trained \emph{with} noise augmentation but \emph{not} smoothed. Empirical results from \cref{sec:exp/aug} highlight that noise augmentation improve both the accuracy and robustness of the smoothed classifier. These results support our argument that noise augmentation may yield realizable hypotheses which are not realizable under smoothing.  
%
%

\textbf{2. When the Gaussian distribution is used as the noise distribution in both noise augmentation and randomized smoothing, how does varying the scaling parameter, $\scalarParameter$, affect performance of the learned classifier?}
To answer this question, we evaluate the natural and adversarial accuracy of smoothed models trained with noise augmentation. 
The results in \cref{sec:exp/detail/eval} suggests that, while smoothing comes with a cost of the natural accuracy, this accuracy on the test set becomes \emph{closer} to that on the training set as the noise level increases. As for adversarial accuracy, randomized smoothing effectively provides adversarial robustness, but the accuracy seems to decline as the noise level increases beyond a critical threshold.

\textbf{3. When the Gaussian distribution is used as the noise distribution $\noiseDistParametrized{}$, how does varying $\scalarParameter$ affect $\hypothesisClassSmooth$? }
Motivated by the work of Zhang et al.~\cite{random-labeling}, we take a real dataset, and re-assign completely random labels for the samples in the train and test sets. This experiment helps in understanding the impact of noise augmentation and randomized smoothing regardless of the underlying noise distribution. As the random labeling procedure effectively mimics hypothesis shattering on the training set. By overfitting a classifier to the training set, we can identify a realizable hypothesis. We then check if it is realizable after smoothing by checking its test set accuracy.

\subsection{Experimental Setup}\label{sec:exp/setup}

\subsubsection{Datasets}\label{sec:exp/setup/dataset}
We consider four image classification datasets: MNIST~\cite{mnist}, CIFAR-10~\cite{cifar10}, GTSRB~\cite{GTSRB} and ImageNet~\cite{imagenet}.
The MNIST dataset contains $60\mathpunct{}000$ training and $10\mathpunct{}000$ test images of handwritten digits, split into $10$ classes. Each image is of size $28\times28$, with a single color channel.
The CIFAR-10 dataset contains $50\mathpunct{}000$ training and $10\mathpunct{}000$ test images, split into $10$ classes.  Each image is of size $32\times32$, with $3$ color channels.
 GTSRB is a dataset containing German traffic signs.  Each image within GTSRB has $3$ channels and is resized to $32\times32$.  The dataset contains $39\mathpunct{}209$ training examples and $12\mathpunct{}630$ test examples, split into $43$ classes. 
The ImageNet dataset contains $1\mathpunct{}232\mathpunct{}167$ training and $49\mathpunct{}000$ validation images, split into $1\mathpunct{}000$ classes. Each image is resized to $256\times256$.

For each dataset, we normalize each image $\boldsymbol{x}$ by subtracting the channel-wise mean $\mu$ and dividing by the channel-wise standard deviation $\sigma$. The values of $\mu$ and $\sigma$ are computed on the entire training set of each dataset, as given in \cref{tab:setup/dataset/normalize}.

\begin{table}[tb]
\caption{Channel-wise Means and Standard Deviations}
\vspace{0.5em}
\label{tab:setup/dataset/normalize}
\centering
\resizebox{0.6\linewidth}{!}{\begin{tabular}{ccc}
\toprule
Dataset & \multicolumn{1}{c}{Mean ($\mu$)} & \multicolumn{1}{c}{Standard Deviation ($\sigma$)} \\
\midrule
MNIST & $(0.1307)$ & $(0.3081)$ \\
CIFAR-10 & $(0.4914, 0.4822, 0.4465)$ & $(0.2023, 0.1994, 0.2010)$ \\
GTSRB & $(0.3787, 0.3482, 0.3571)$ & $(0.3005, 0.2944, 0.3008)$ \\
ImageNet & $(0.4850, 0.4560, 0.4060)$ & $(0.2290, 0.2240, 0.2250)$ \\
\bottomrule
\end{tabular}}
\end{table}

\subsubsection{Network and Learning Algorithms}\label{sec:exp/setup/network}
We apply different model architectures to each dataset. A small depth $4$ CNN network, a ResNet~\cite{resnet} with depth $110$, a ResNet with depth $20$, and a ResNet with depth $50$ are used for MNIST, CIFAR-10, GTSRB, and ImageNet, respectively. Each model is trained by Stochastic Gradient Descent (SGD) with a momentum parameter of $0.9$. An initial learning rate of $0.01$ (for small CNN) or $0.1$ (for ResNet) are used, with a decay factor of $0.7$ (for MNIST) or $0.9$ (for GTSRB) per training epoch. As for CIFAR-10, we use the same network and settings specified in~\cite{DBLP:conf/icml/CohenRK19}, where the initial learning rate ($0.1$) is decreased by a factor of $0.1$ per $30$ training epochs. For ImageNet, we use the pre-trained models of Cohen et al.~\cite{DBLP:conf/icml/CohenRK19}.

\subsubsection{Randomized Smoothing}\label{sec:exp/setup/smoothing}
As we consider $\ell_2$-bounded adversaries in this evaluation, our choice of noise distribution $\noiseDistParametrized$ is the Normal Distribution from Cohen et al.~\cite{DBLP:conf/icml/CohenRK19}. In \Cref{tab:noiseDistributionExamples}, we showed how to convert the standard notation for the Normal Distribution into that of our generalized noise distribution. We refer to the standard deviation $\sigma$ of the Normal distribution as \emph{noise level}. 

Therefore, we largely reuse the randomized smoothing framework implemented by Cohen et al.~\cite{DBLP:conf/icml/CohenRK19} on Gaussian noise. We randomly draw $N=10\mathpunct{}000$ samples around each input image to predict its label. Other parameters are left as default values, e.g., a failure probability of $0.001$ is used for the Monte Carlo algorithm, which is utilized by randomized smoothing as an internal procedure.

\newcommand{\advExample}{\hat{\boldsymbol{x}}}
\subsubsection{Adversarial Accuracy}\label{sec:exp/definition}
We define adversarial accuracy as the accuracy under the PGD adversarial attack scheme $\mathrm{SmoothAdv}$ proposed by Salman et al.~\cite{smoothing-pgd}. This scheme is designed specifically for randomized smoothing, and is different than the normal PGD attack proposed by Madry et al.~\cite{DBLP:conf/iclr/MadryMSTV18}. Salman et al.~\cite{smoothing-pgd} report that this attack scheme finds adversarial examples which decrease the probability of correct classification of the smoothed classifier. This attack uses $k$ steps to obtain an adversarial example within an $\ell_2$ norm ball of radius $\epsilon$ centered at $\boldsymbol{x}$.

For all experiments, we apply this PGD attack with 20 steps. For each dataset, we consider four values of $\epsilon$. For CIFAR-10, GTSRB and ImageNet, we use $0.25$, $0.50$, $1.00$, and $2.00$. For MNIST, we use $1.50$, $2.00$, $2.50$, and $5.00$. We do not report other values of $\epsilon$, because the resulting observations are similar. Larger values for MNIST were selected because its one-channel images generally require stronger attacks.

In a similar fashion to natural accuracy, adversarial accuracy can be defined as $\frac{1}{N}\sum_{i=1}^N\loss_{\zeroOne}\p[\big]{f\p{\hat{\boldsymbol{x}}_i},y_i}$, where $\hat{\boldsymbol{x}}_i$ is the $i^{\textrm{th}}$ adversarial example obtained by $\mathrm{SmoothAdv}$.

\subsubsection{Random Labeling}\label{sec:exp/setup/random}
When examining random labeling, we focus on the MNIST dataset and using the network and hyperparameters defined in \cref{sec:exp/setup/network}. To allow for an easier overfitting, the training procedure does not utilize DropOut -- as suggested by Zhang et al.~\cite{random-labeling}. For each trial, we generate a new dataset by randomly re-assigning the labels for each image in both the training and test sets. 


\subsection{Q1. Dependence on Noise Augmentation}\label{sec:exp/aug}

Existing works for randomized smoothing require noise augmentation to be used in conjunction with smoothing~\cite{DBLP:conf/icml/CohenRK19, DBLP:conf/sp/LecuyerAG0J19}. In this section, we draw attention to the connection between these two components and its impact on the classification performance. First, we compare the performance of a smoothed model with and without noise augmentation during training. We characterize the performance in terms of natural accuracy and adversarial accuracy.
%

%
%
 We evaluate the natural accuracy of the training and test sets, and the adversarial accuracy  on the test set to demonstrate the effectiveness of adversarial robustness provided by randomized smoothing against a norm-bounded adversary. We then repeat this evaluation for the noise levels and datasets discussed in \cref{fig:exp/aug}. 
 

Several observations can be made from \cref{fig:exp/aug}, but in this section, we first focus on the difference between the performance of models that are trained with and without noise augmentation, i.e., the difference between the \emph{solid} and \emph{dotted} lines.
First, we notice that at zero noise level, the training, natural, and adversarial accuracy are the same for the models with and without noise augmentation, because noise augmentation and smoothing reduce to the normal case. \textbf{As the noise level used in smoothing is increased, we empirically observe that models trained without noise augmentation incur a severe drop in accuracy at a lower noise level $\scalarParameter$ than the noise level at which a similar severe drop in accuracy occurs in models trained with noise augmentation.}
A more insightful observation is that training without noise augmentation reduces the natural accuracy as smoothing begins to provide desirable adversarial accuracy.

%

\begin{figure}[b]
\centering
\subfigure[MNIST]{
    \label{fig:exp/aug/mnist}
    \includegraphics[width=0.23\linewidth]{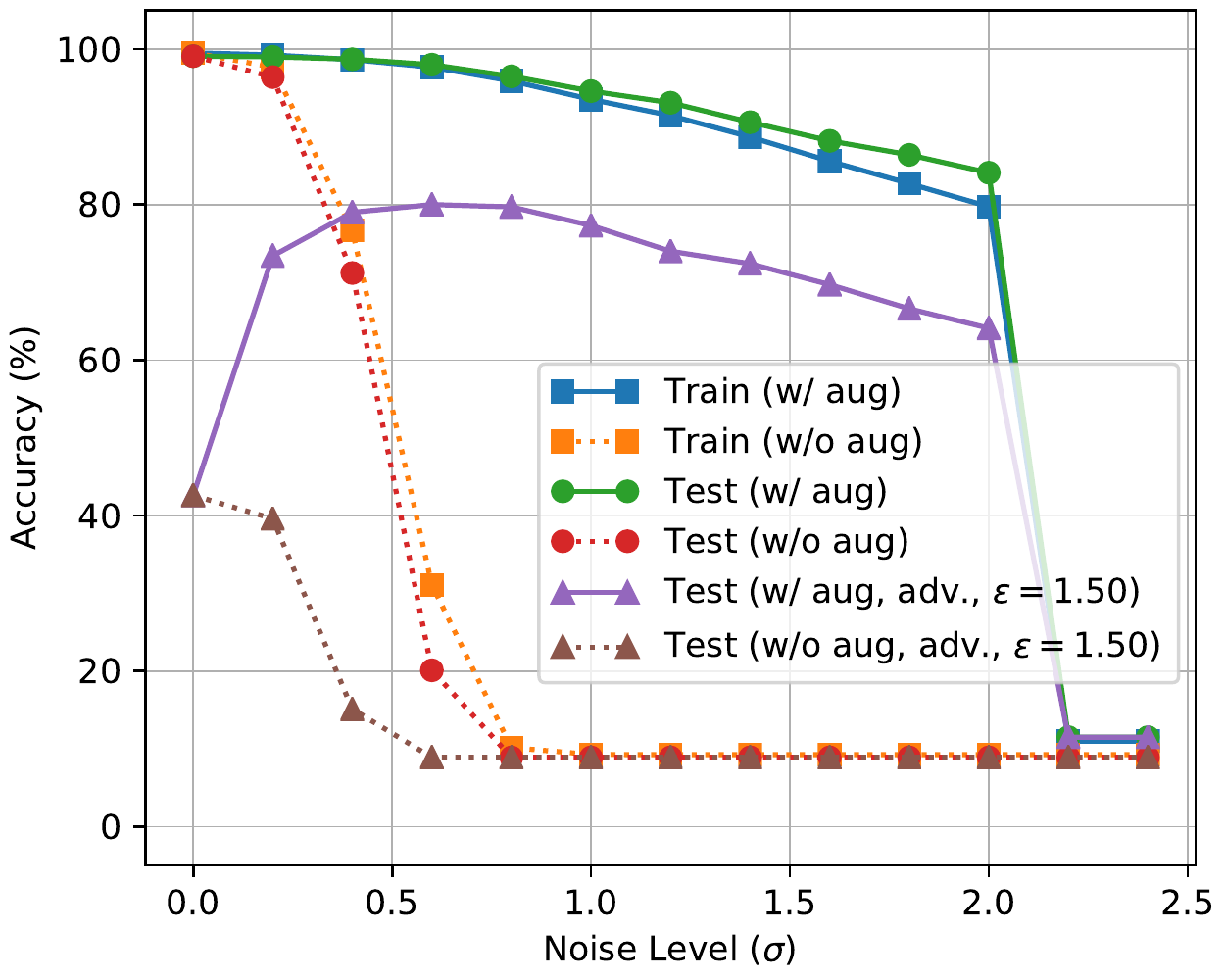}
}
\subfigure[CIFAR-10]{
    \label{fig:exp/aug/cifar}
    \includegraphics[width=0.23\linewidth]{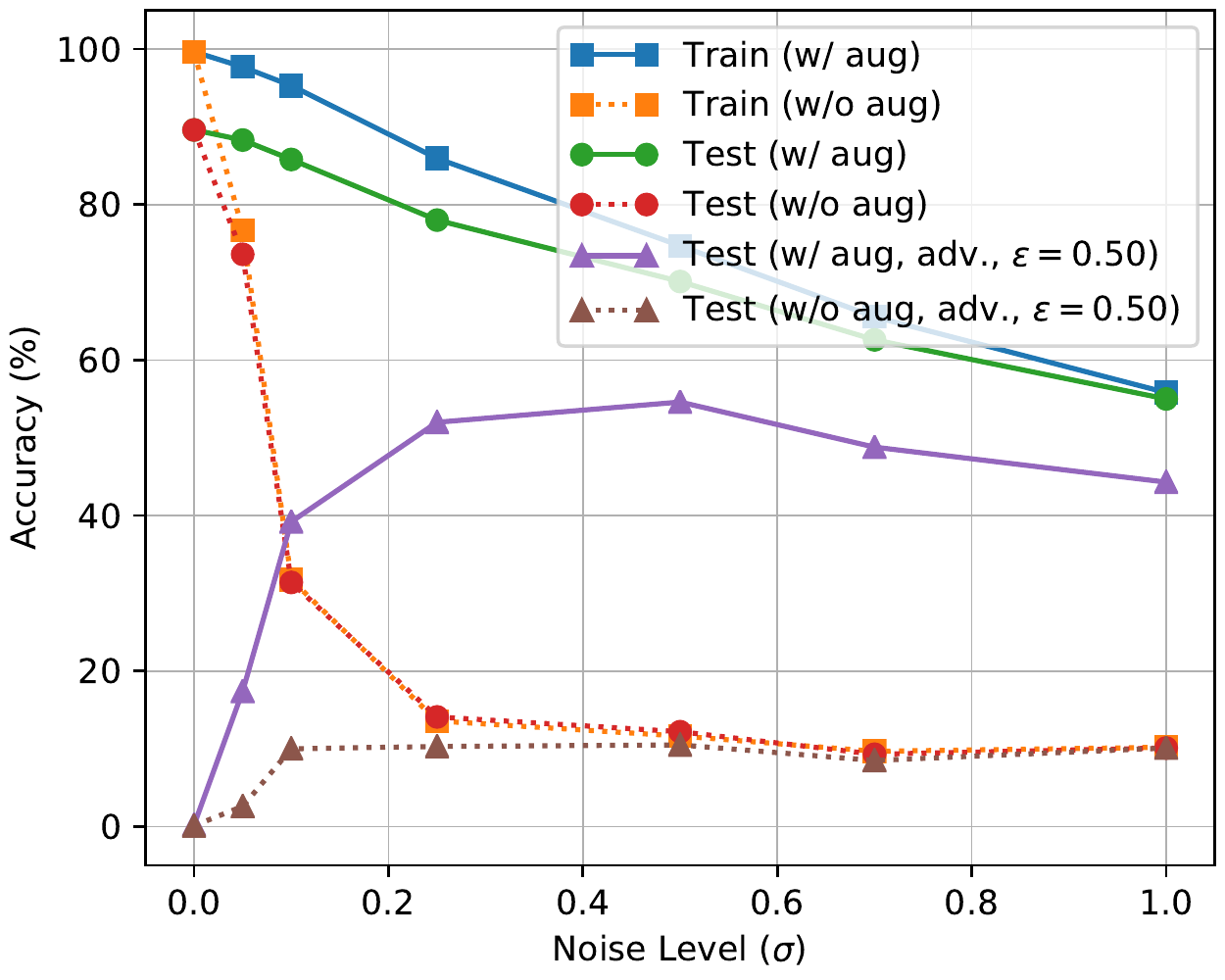}
}
\subfigure[GTSRB]{
    \label{fig:exp/aug/gtsrb}
    \includegraphics[width=0.23\linewidth]{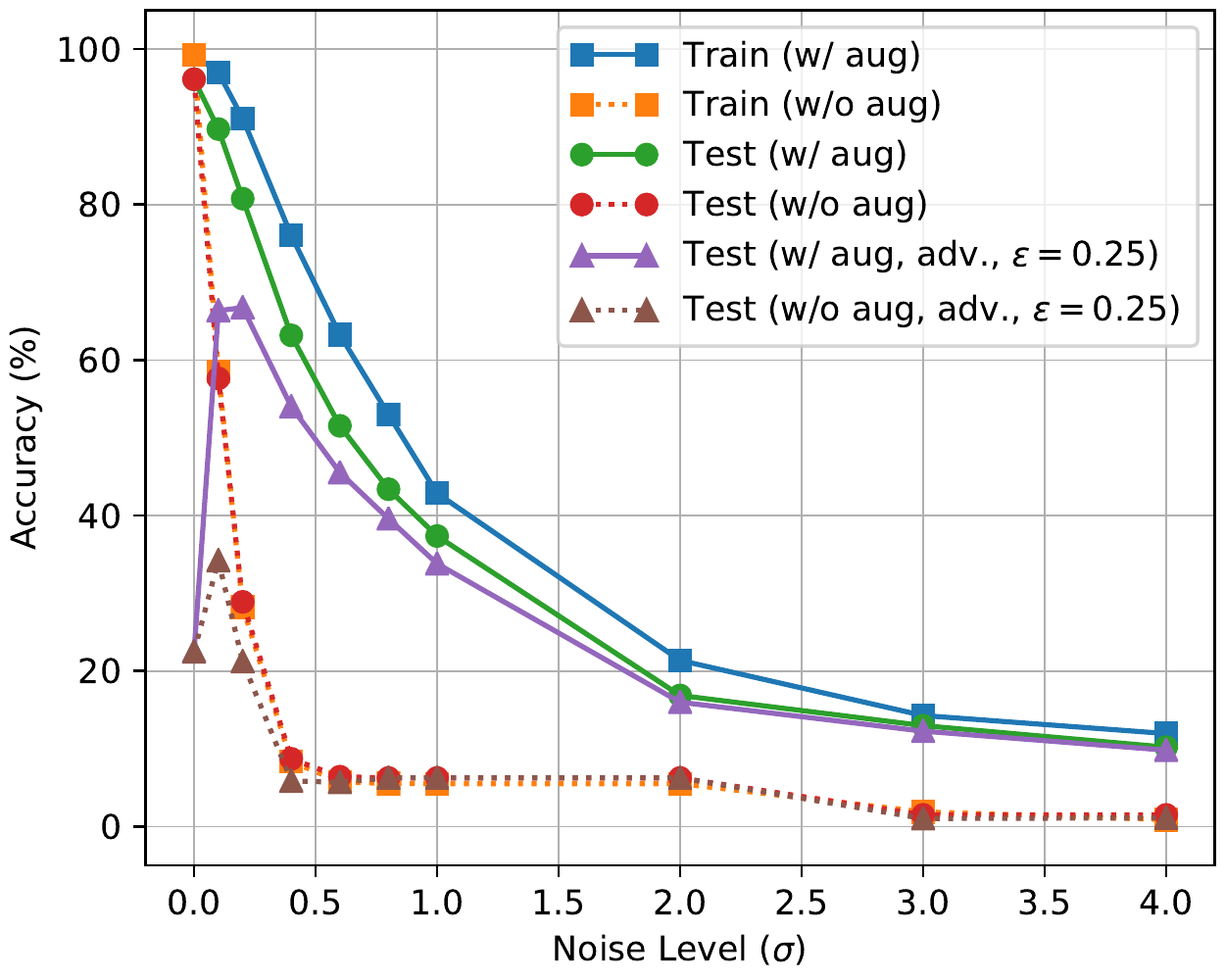}
}
\subfigure[ImageNet]{
    \label{fig:exp/aug/imagenet}
    \includegraphics[width=0.23\linewidth]{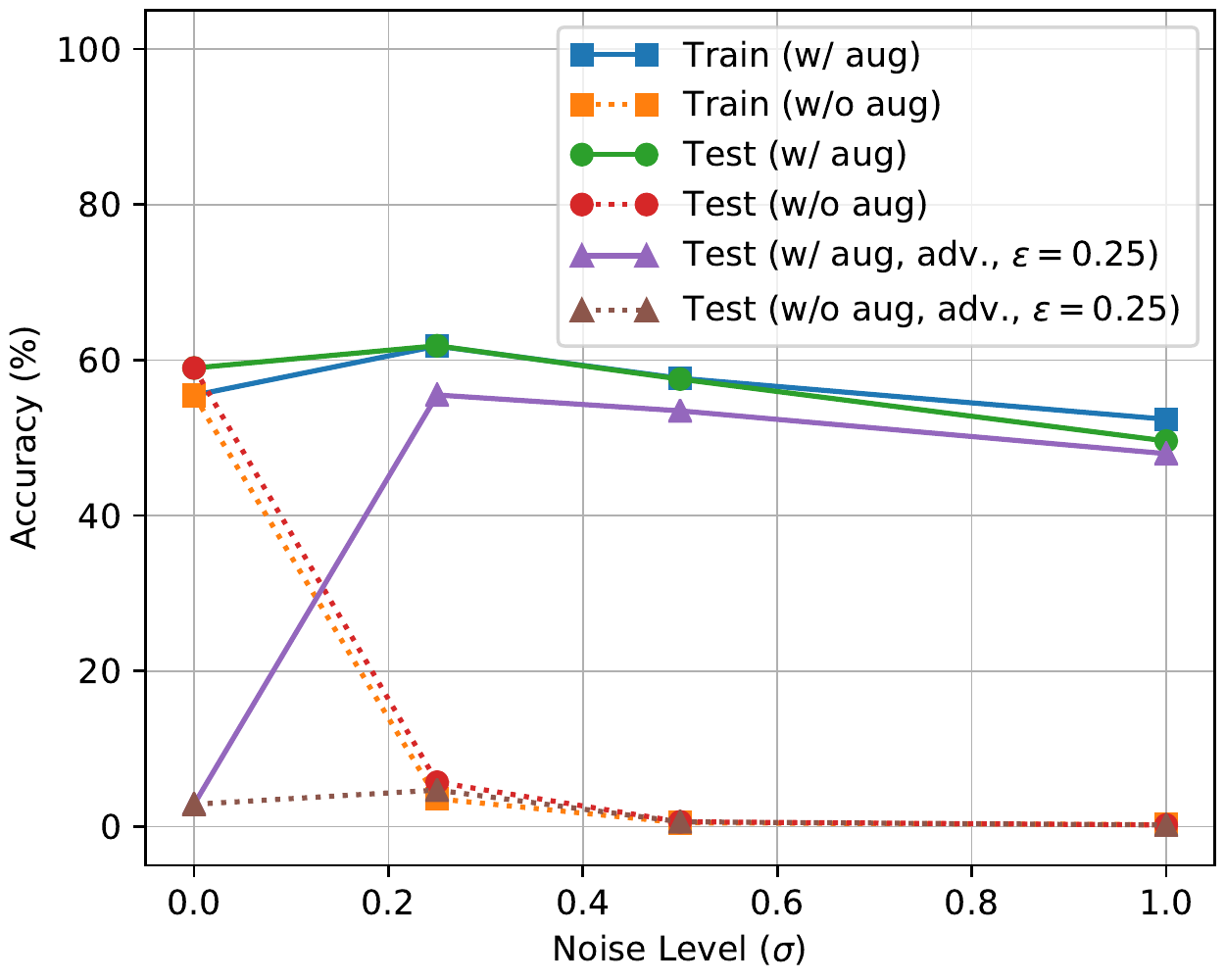}
}
\hspace{-1em}
\caption{Natural and Adversarial Accuracy (\%) vs Noise Levels ($\sigma$) for Randomized Smoothing with/without Noise Augmentation. Triangular markers denote the adversarial accuracy under PGD attack with the maximum $\ell_2$ norm of $\epsilon$. We report only one $\epsilon$ for each dataset to reduce the complexity of this figure. Dotted lines refer to the accuracy of models trained without noise augmentation.}
\label{fig:exp/aug}
\end{figure}


\begin{figure*}[ht]
\centering
\subfigure[MNIST (natural)]{
    \label{fig:exp/rand/mnist/natural}
    \includegraphics[width=0.24\linewidth]{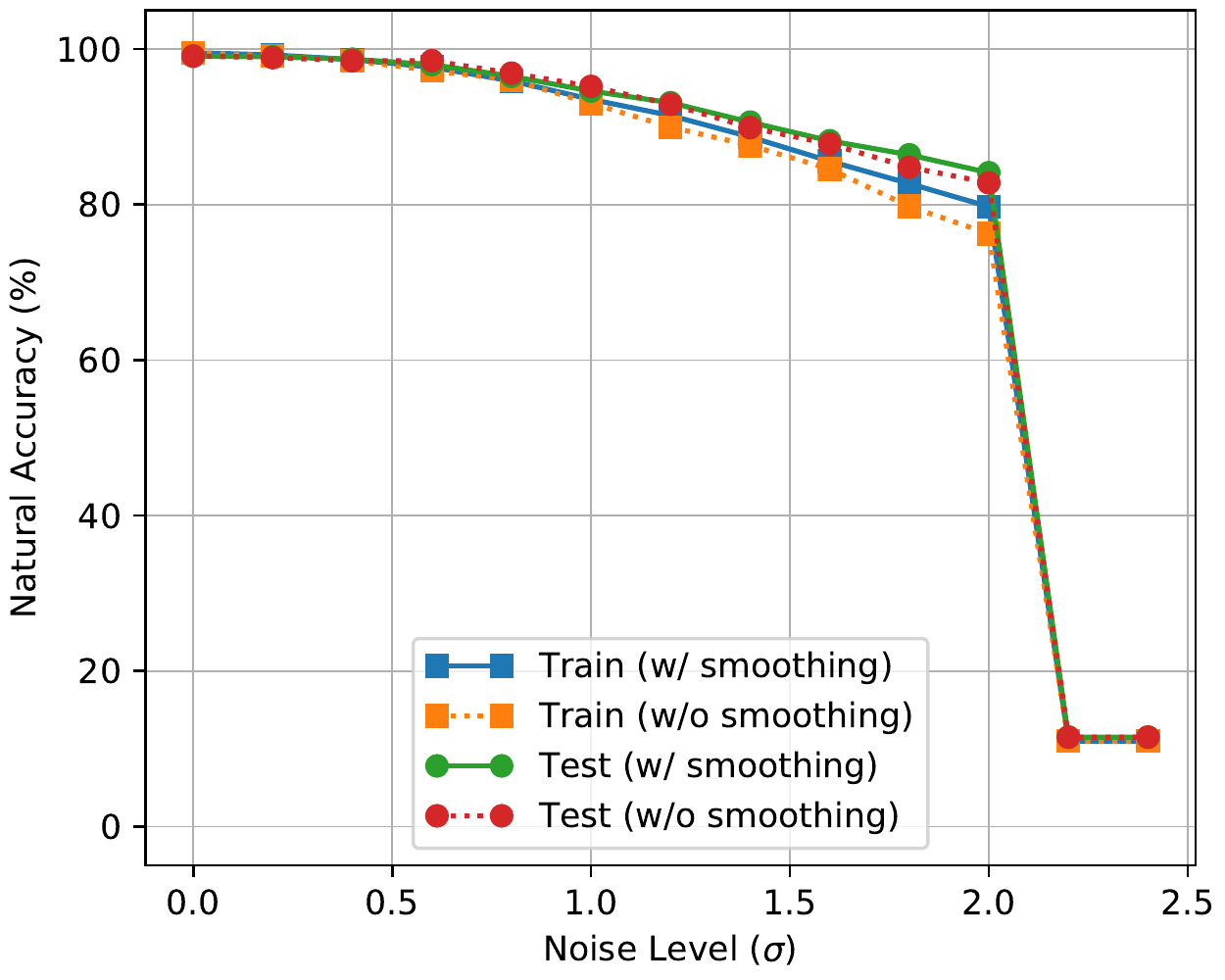}
}
\hspace{-1em}
\subfigure[CIFAR-10 (natural)]{
    \label{fig:exp/rand/cifar/natural}
    \includegraphics[width=0.24\linewidth]{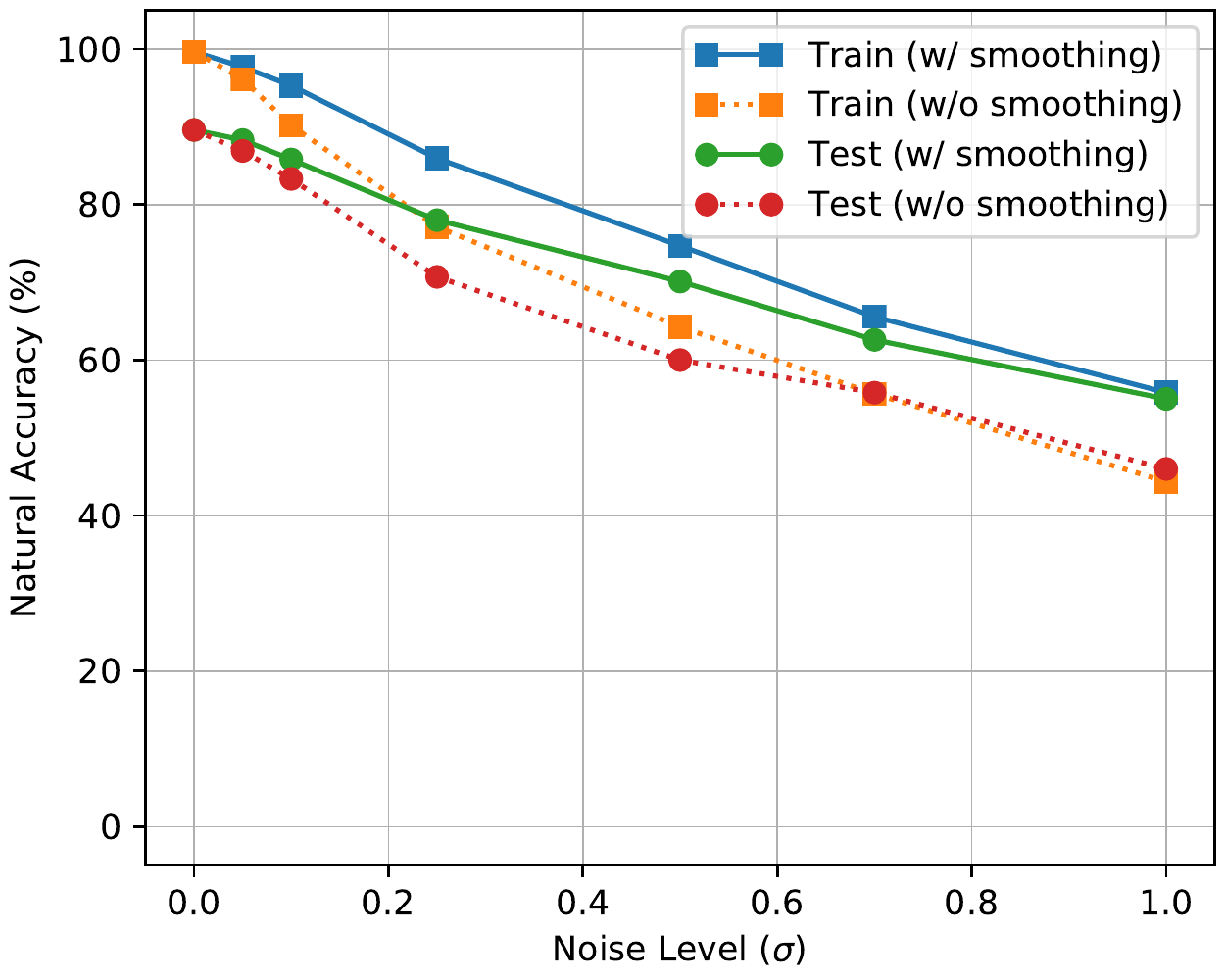}
}
\hspace{-1em}
\subfigure[GTSRB (natural)]{
    \label{fig:exp/rand/gtsrb/natural}
    \includegraphics[width=0.24\linewidth]{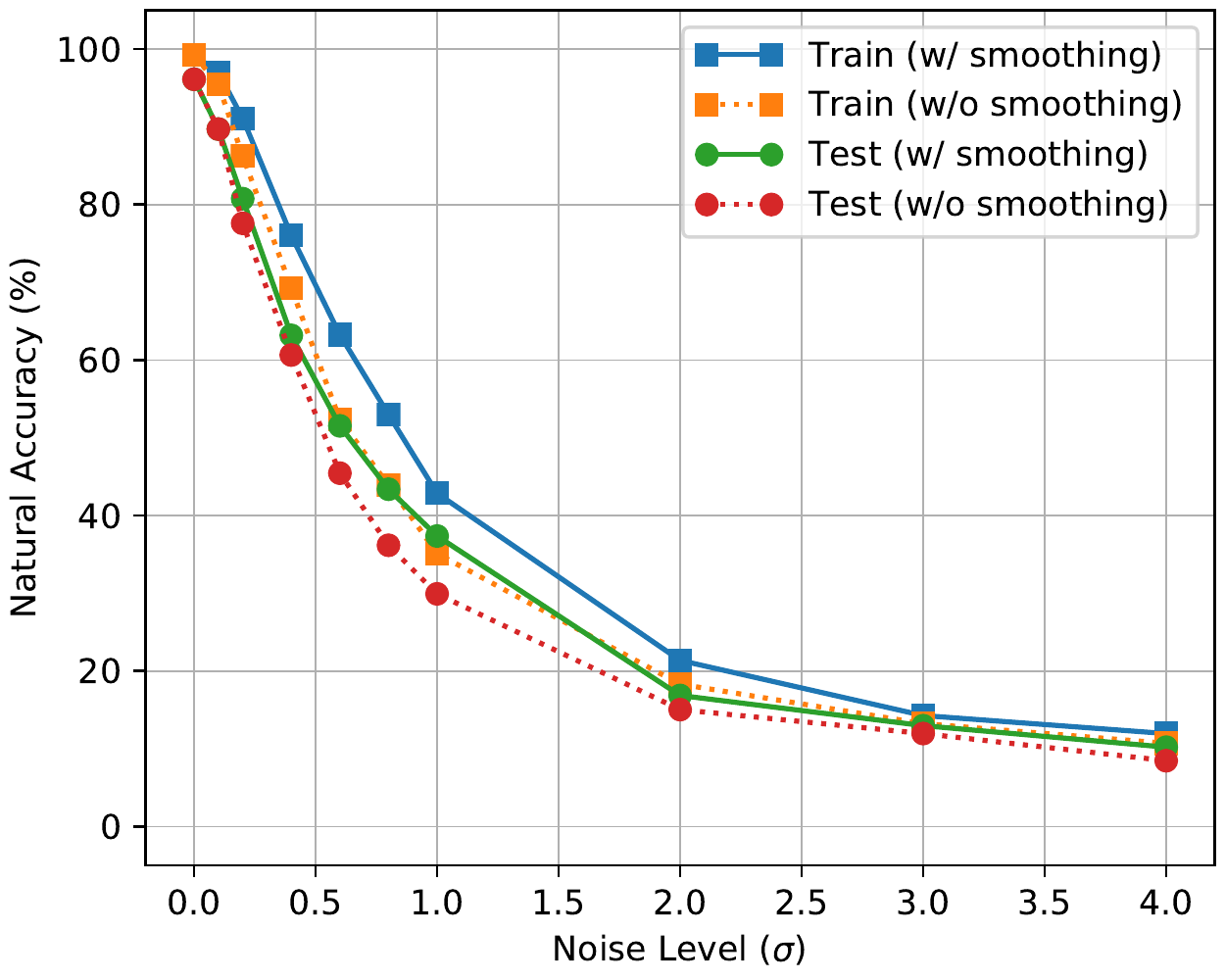}
}
\hspace{-1em}
\subfigure[ImageNet (natural)]{
    \label{fig:exp/rand/imagenet/natural}
    \includegraphics[width=0.24\linewidth]{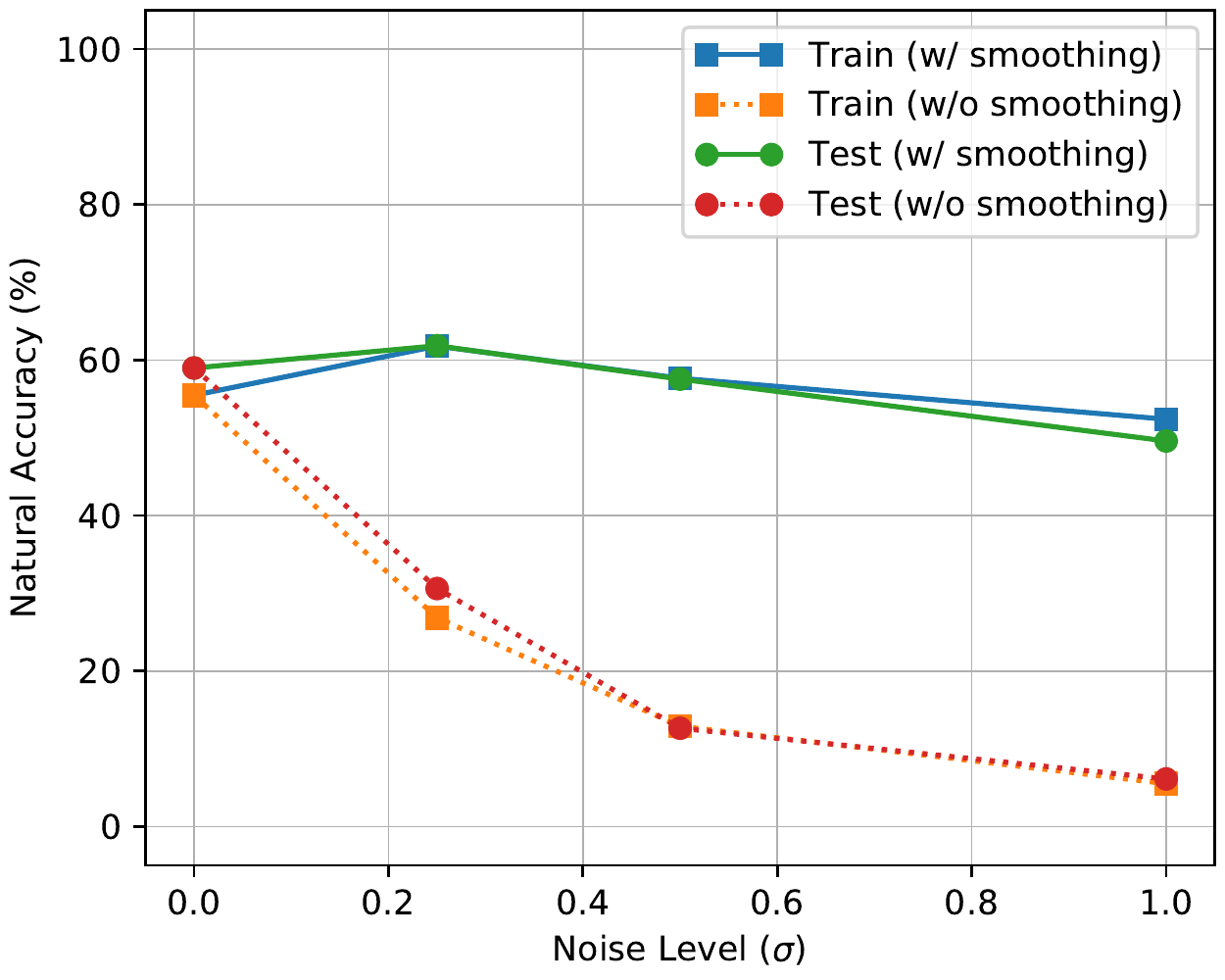}
}
\subfigure[MNIST (adversarial)]{
    \label{fig:exp/rand/mnist/adversarial}
    \includegraphics[width=0.24\linewidth]{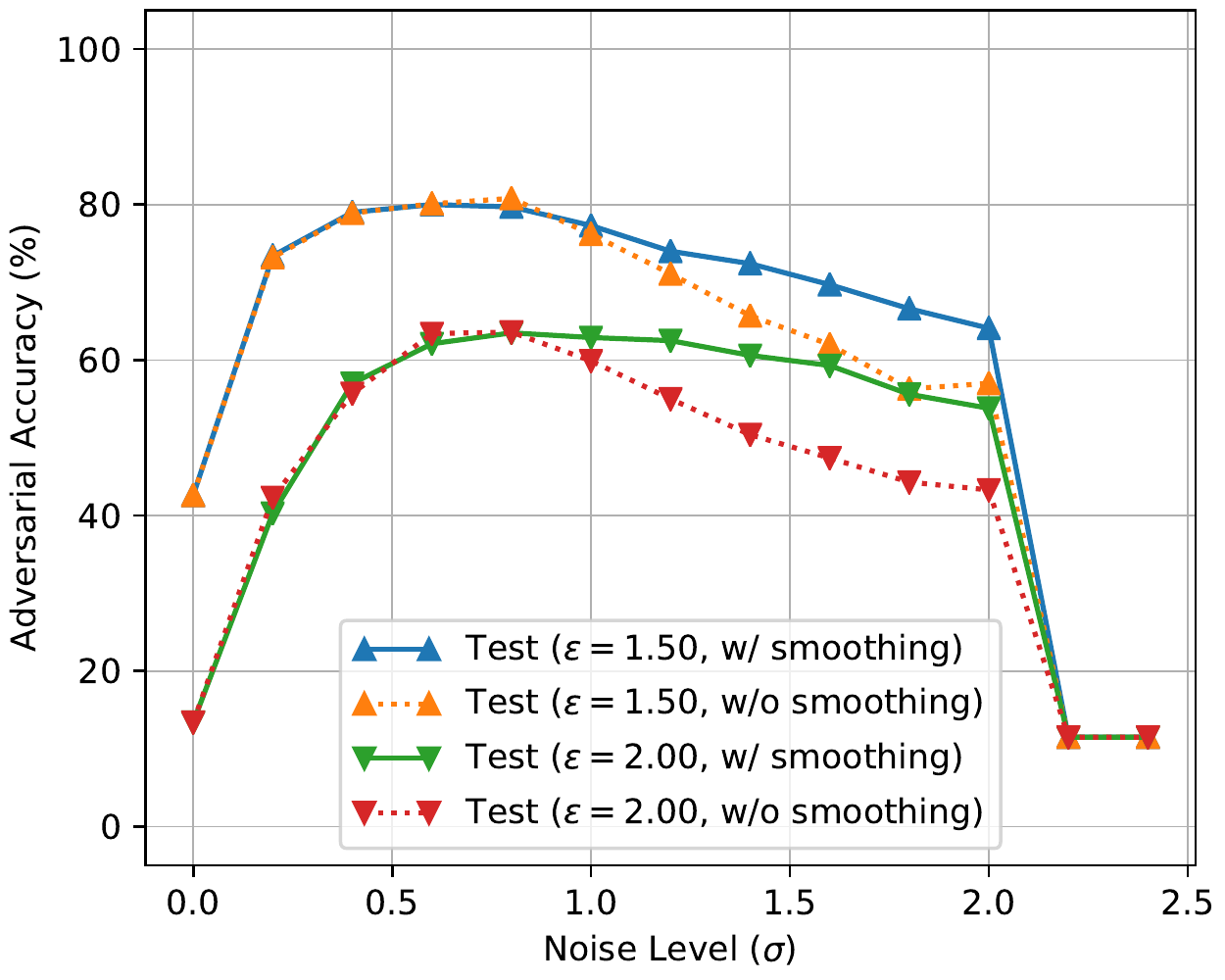}
}
\hspace{-1em}
\subfigure[CIFAR-10 (adversarial)]{
    \label{fig:exp/rand/cifar/adversarial}
    \includegraphics[width=0.24\linewidth]{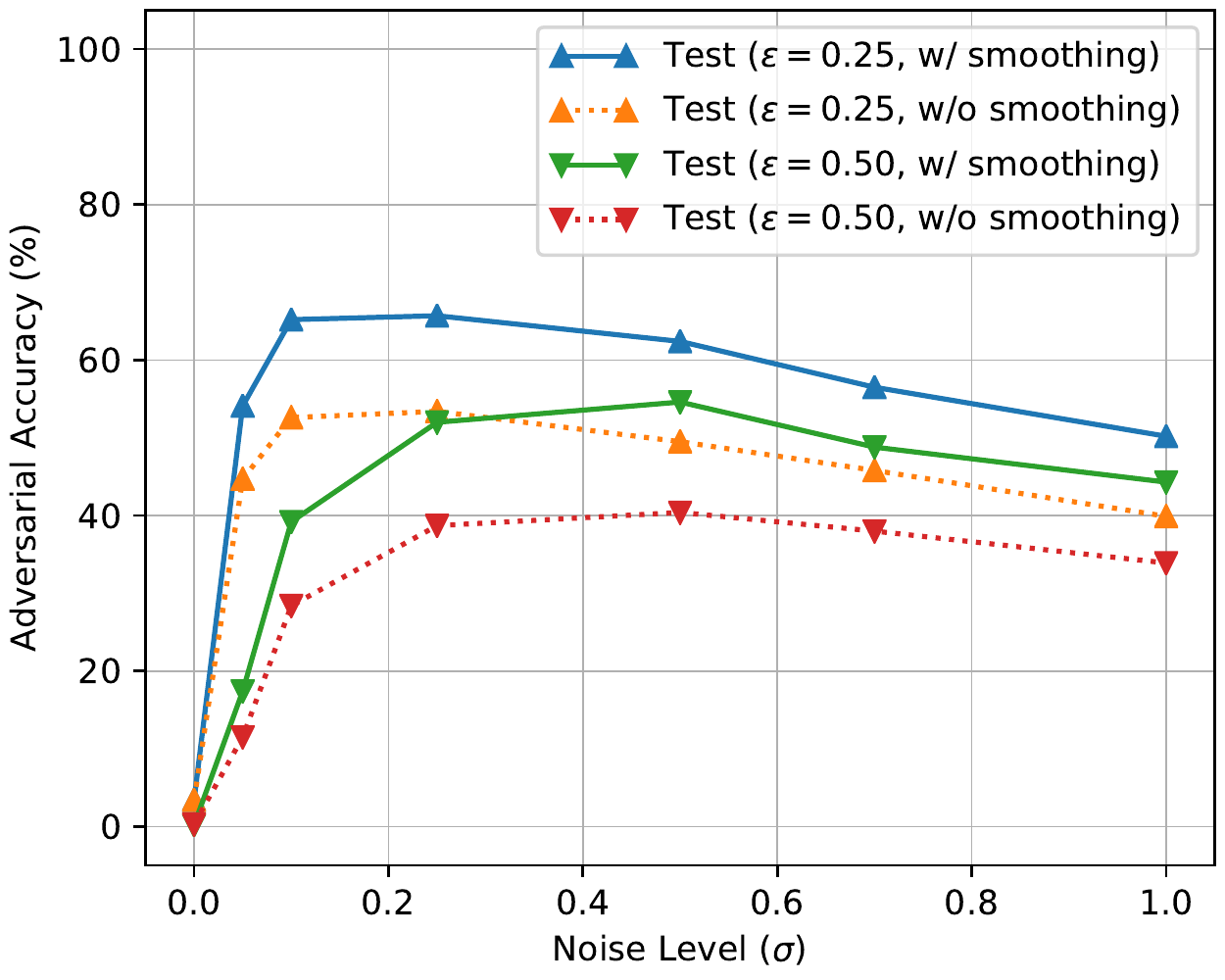}
}
\hspace{-1em}
\subfigure[GTSRB (adversarial)]{
    \label{fig:exp/rand/gtsrb/adversarial}
    \includegraphics[width=0.24\linewidth]{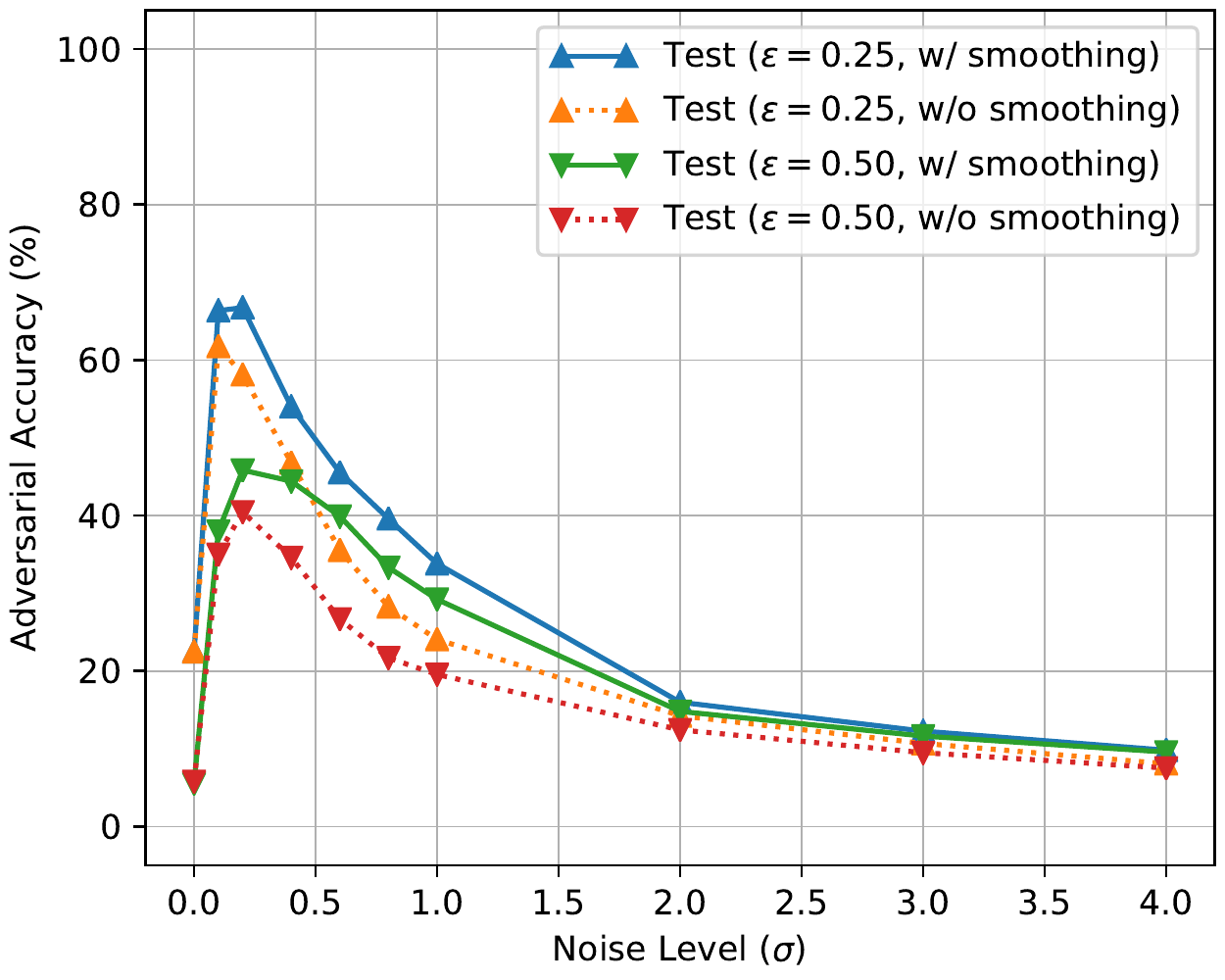}
}
\hspace{-1em}
\subfigure[ImageNet (adversarial)]{
    \label{fig:exp/rand/imagenet/adversarial}
    \includegraphics[width=0.24\linewidth]{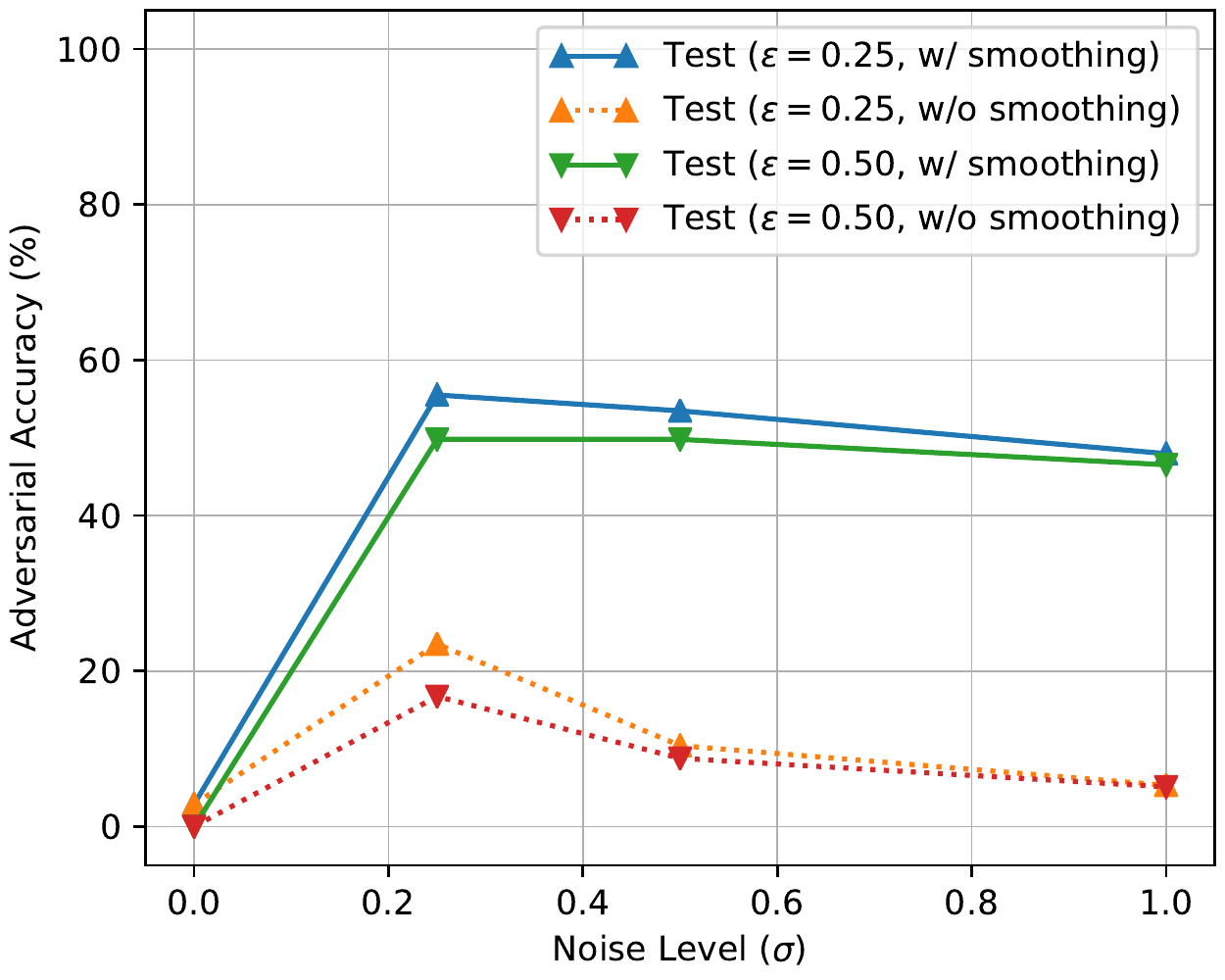}
}
\caption{Natural and Adversarial Accuracy (\%) vs Noise Levels ($\sigma$) for Randomized Smoothing with/without Smoothing (all models are trained with noise augmentation). Dotted lines refer to the corresponding performance when the model is trained with noise augmentation but not smoothed.}
\label{fig:exp/rand/compare}
\end{figure*}

Second, we consider models that are trained with noise augmentation but not smoothed. We use a normal PGD attack to evaluate the adversarial accuracy for the non-smoothed models. \Cref{fig:exp/rand/compare} compares the performance of these non-smoothed models with smoothed models. As evident in \cref{fig:exp/rand/compare}, we see that \textbf{noise augmentation provides adversarial robustness comparable to that of the randomized smoothing operation, especially for relatively small noise levels}. For instance, the adversarial accuracy for smoothed and non-smoothed models in \cref{fig:exp/rand/mnist/adversarial,fig:exp/rand/gtsrb/adversarial} almost overlap at either small or large noise levels. \Cref{fig:exp/rand/cifar/natural} also shows a less significant but consistent trend.
%

\subsection{Q2. Impact of Noise Parameter}\label{sec:exp/detail/eval}

In this section, we explore the impact of the noise parameter on the classifier's performance. In the following set of experiments, the smoothed model is always trained with the correct noise augmentation, as specified in \cite{DBLP:conf/icml/CohenRK19}.
In addition to the natural and adversarial accuracy, we also characterize the performance of randomized smoothing in terms of a proxy of the estimation error. We define this proxy as the difference between the natural accuracy on the training set and the test set, i.e., the accuracy that we lose when applying a model to the test set.

As shown in \cref{fig:exp/eval}, we evaluate the natural and adversarial accuracy on four datasets by varying the noise level. The noise levels for each dataset are chosen to show the complete behavior of randomized smoothing. The adversarial accuracy under four attack levels are reported for each dataset.
Note that we did not explicitly calculate this proxy of estimation error, as it can be easily observed by looking at the gap between the curves of natural accuracy on the training and test sets, i.e., the gap between curves with \emph{round} and \emph{square} markers.

We now interpret \cref{fig:exp/eval} from three perspectives: natural accuracy, estimation error, and adversarial accuracy.

We begin by looking at the curves for natural accuracy, i.e., curves with \emph{round} and \emph{square} markers, which show that the natural accuracy consistently drops as the model is smoothed with higher noise levels. 
%

Next, we examine the performance of randomized smoothing from the perspective of a proxy of estimation error, i.e., the gap between the two curves with \emph{square} and \emph{round} markers. For all datasets, we see that \textbf{while smoothing comes at a cost of natural accuracy, the difference between accuracy on the test set  and the accuracy  on the training set decreases as the noise level increases.} The test set accuracy may even outperform the training set accuracy, for instance in \cref{fig:exp/eval/mnist}.

Finally, we study the adversarial accuracy that characterizes the effectiveness of adversarial robustness provided by randomized smoothing (see curves with \emph{triangular} markers). For a noise level of zero -- no smoothing -- the adversarial attack significantly reduces the accuracy as expected. At higher noise levels, randomized smoothing is more effective and starts providing adversarial robustness; the accuracy under adversarial attack increases and gradually approaches the natural one. However, the natural accuracy on the test set is also decreasing as the adversarial one trying to approach it, the adversarial accuracy is therefore always lower than the sharply decreasing natural accuracy, in the sense that \textbf{randomized smoothing fails to provide a satisfactory adversarial robustness before it costs too much test accuracy.} We also observe that the adversarial accuracy declines after the noise level is raised beyond a threshold. This observation supports our conclusion in \cref{thm:threshold}.


\begin{figure}[t]
\centering
\hspace{-1em}
\subfigure[MNIST]{
    \label{fig:exp/eval/mnist}
    \includegraphics[width=0.23\linewidth]{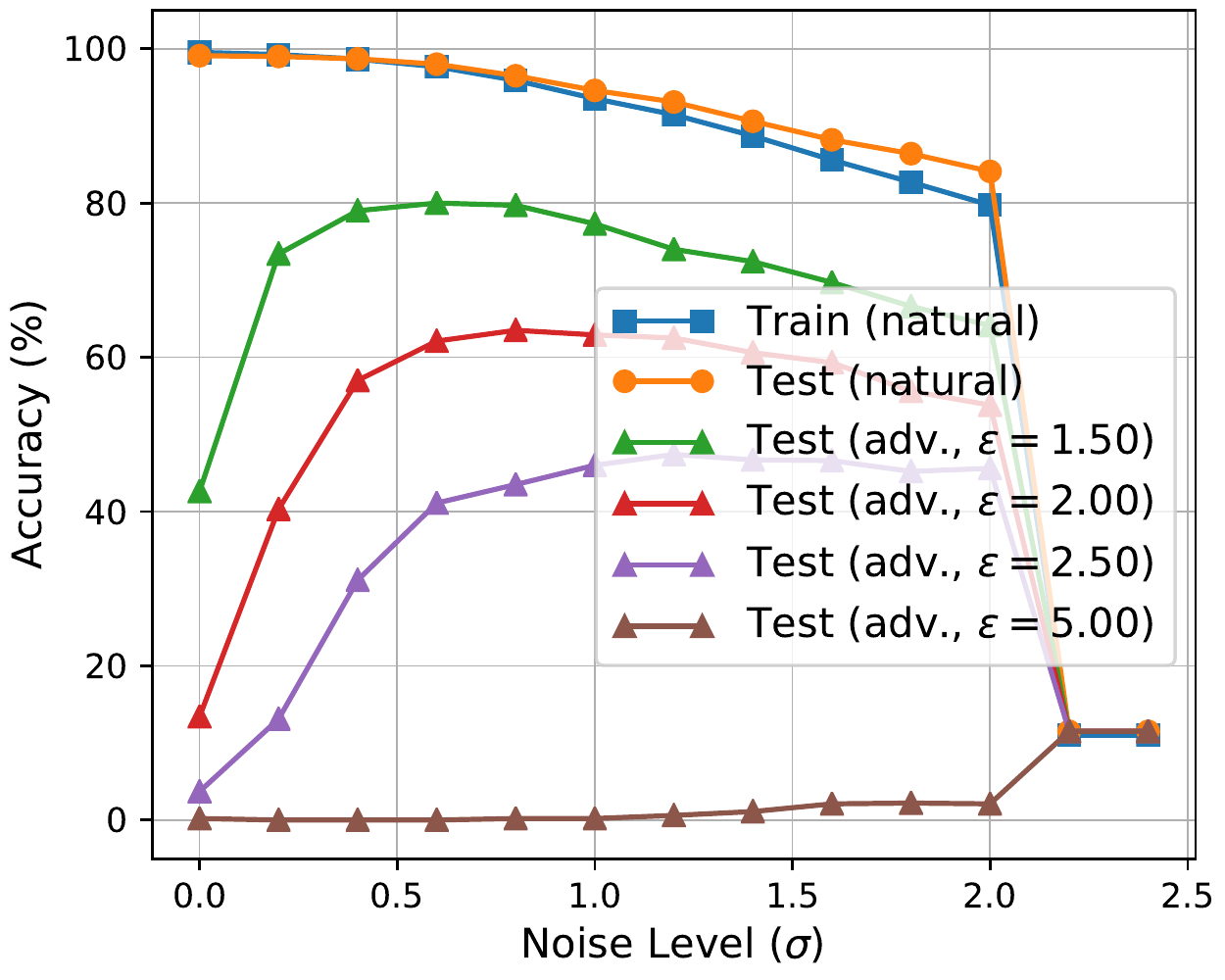}
}
\subfigure[CIFAR-10]{
    \label{fig:exp/eval/cifar}
    \includegraphics[width=0.23\linewidth]{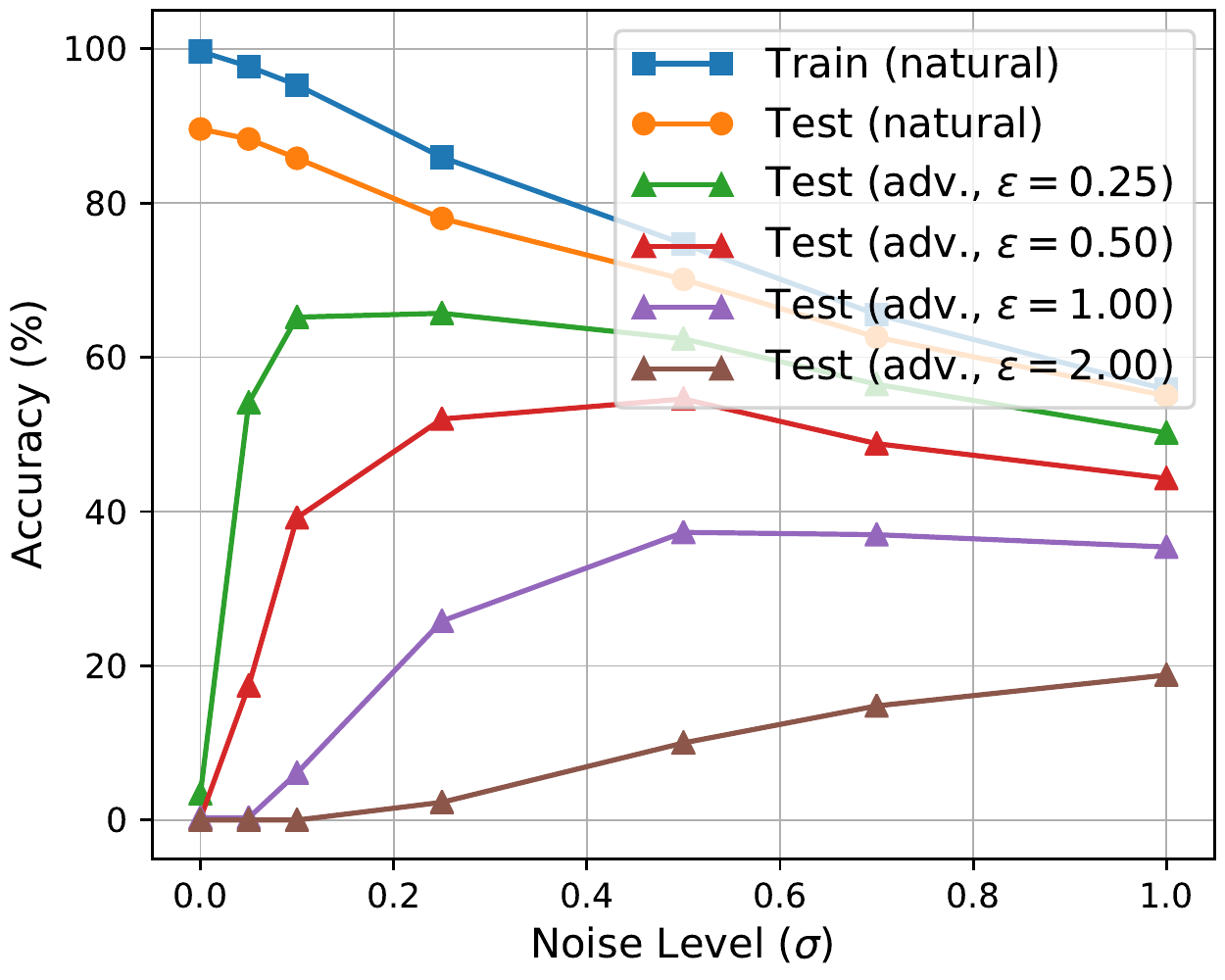}
}
\subfigure[GTSRB]{
    \label{fig:exp/eval/gtsrb}
    \includegraphics[width=0.23\linewidth]{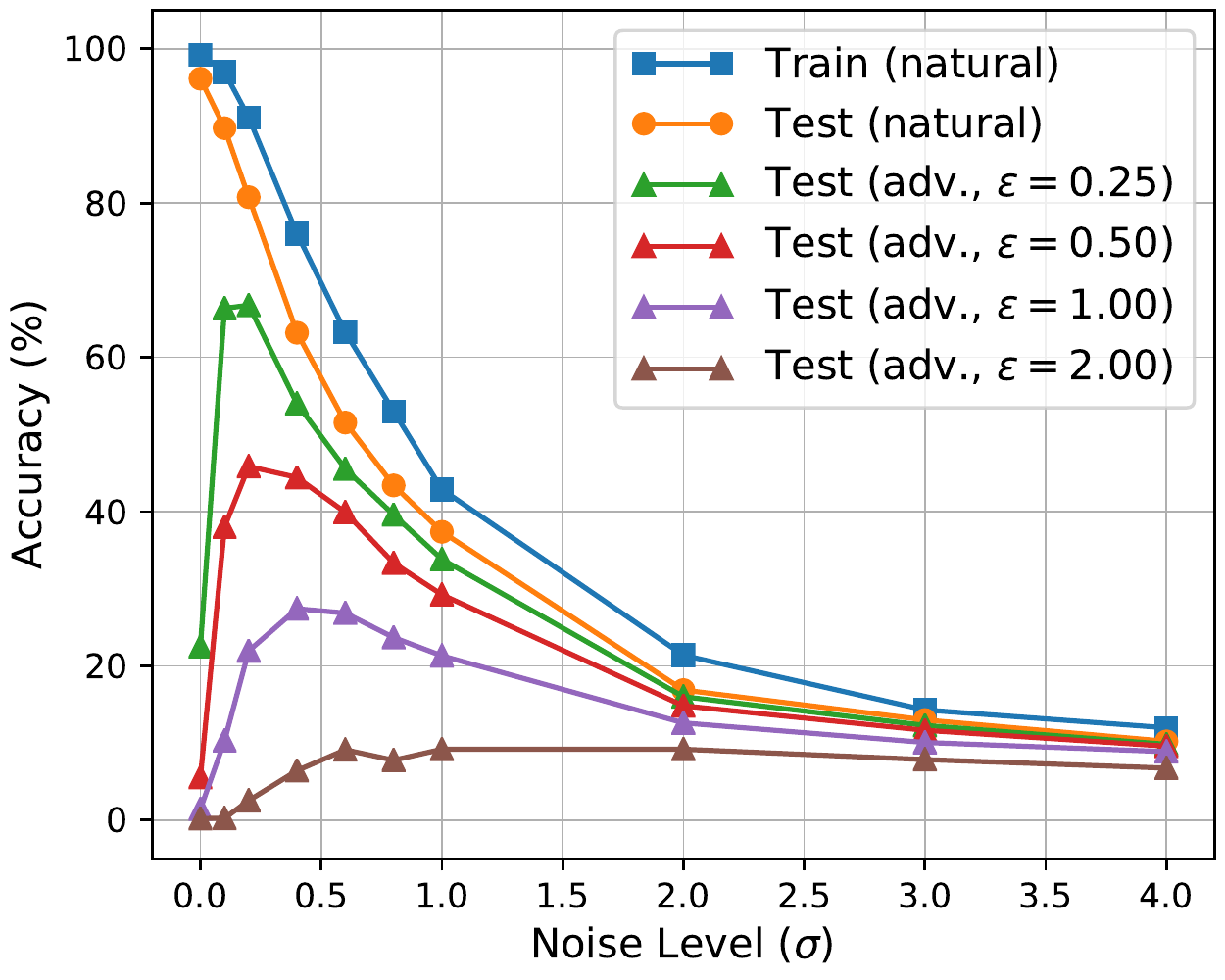}
}
\subfigure[ImageNet]{
    \label{fig:exp/eval/imagenet}
    \includegraphics[width=0.23\linewidth]{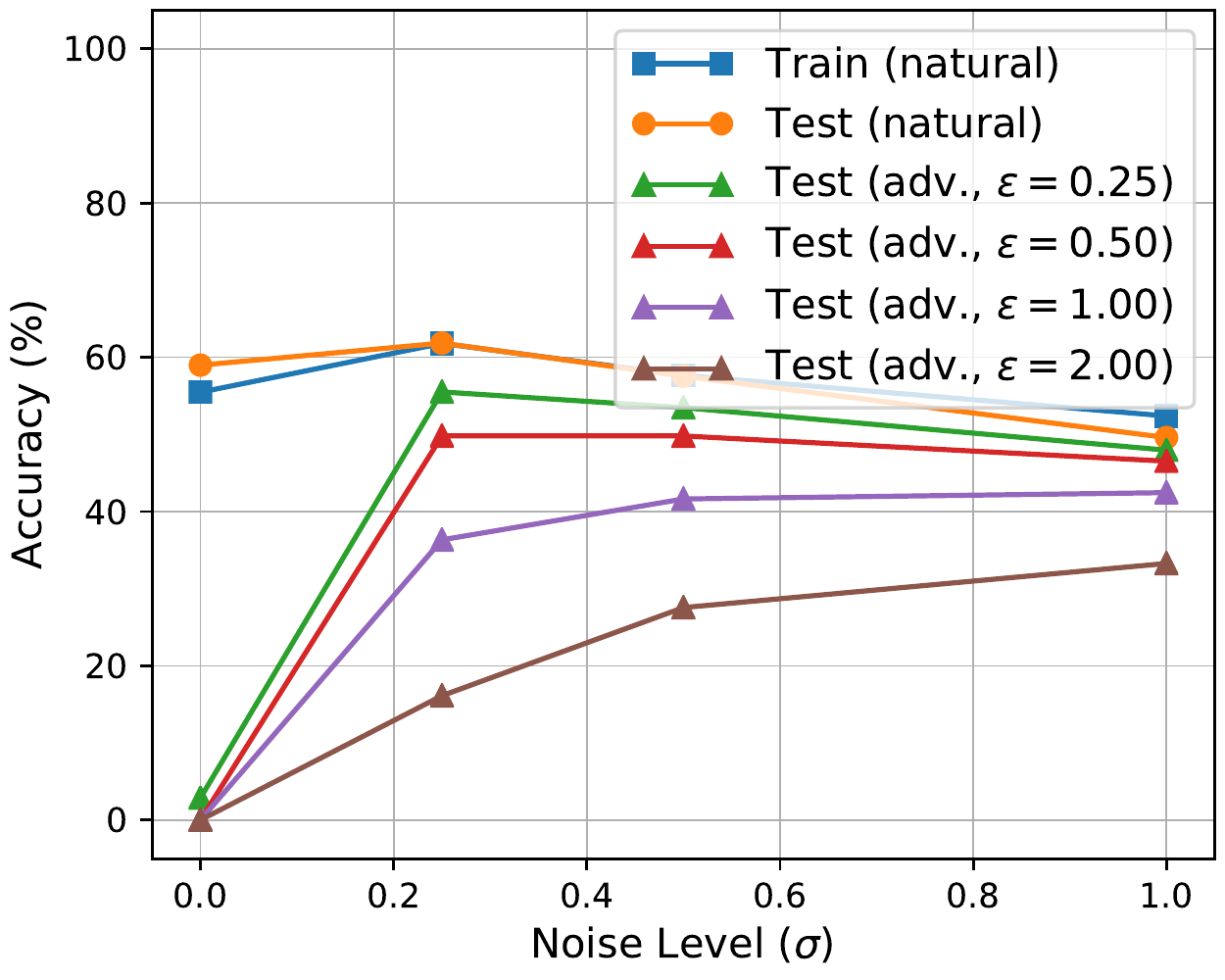}
}
\caption{Natural and Adversarial Accuracy (\%) vs Noise Levels ($\sigma$) for Randomized Smoothing. Triangular markers denote the adversarial accuracy under PGD attack with the maximum $\ell_2$ norm of $\epsilon$.}
\label{fig:exp/eval}
\end{figure}

\subsection{Q3. Size of the Hypothesis Class}
\label{sec:exp/detail/random}

In this section, we explore how the theory within this paper applies to general datasets.  Motivated by Zhang et al~\cite{random-labeling}, we use a set of MNIST examples, to each of which we have assigned a label uniformly at random from $\s{0,\dotsc,9}$, as a proxy for general datasets.  After constructing this new dataset, we train a model with the noise augmentation procedure at noise level $\scalarParameter$ and noise distribution $\noiseDistParametrized{}$, then we evaluate the performance of randomized smoothing. The model and parameters we used were specified in \cref{sec:exp/setup/network}. With our model trained using the noise augmentation procedure, we plot the accuracy of the classifier with and without the randomized smoothing operation applied, on both the test and train data sequences. When plotting the accuracy of a classifier to which the randomized smoothing operation has been applied, we use the same $\scalarParameter$ and noise distribution $\noiseDistParametrized{}$ as was applied in our noise augmented training procedure.  This entire process is repeated for 30 trials.  Moving forward, when the term original MNIST, when used without a modifier, means the original MNIST data, with original (non-random) labels.

When $\scalarParameter$ exceeds a threshold value $\scalarParameterVar$, our theory says the set of smooth hypotheses $\hypothesisClassSmooth$ realizable on $\unlabeledTrainingData{}$ will decrease, thereby reducing, in expectation, the accuracy of the best smooth hypothesis in that class.

It is well-known that clustering-based unsupervised learning techniques achieve respectable accuracy on original MNIST data.  For the original MNIST dataset, examples within the same class tend to lie close to one another.  With noise augmentation on a set of randomly labeled data, this clustering behavior to continues: As the noise level $\scalarParameter$ increases, larger clusters of points in feature space tend to have the same label, thereby reducing training accuracy on randomly-labeled MNIST.  Furthermore, the training accuracy on data when the randomized smoothing operation is applied, is closer to the test accuracy.  This may be evidence for why smoothing and noise augmentation often appear together.
%

Each model is trained for $200$ epochs with an initial learning rate of $0.01$, which is decreased by a factor of $0.95$ per epoch. We empirically observe that models trained with noise augmentation have higher loss, as shown in \cref{fig:exp/rand/converge}. Note that the model with zero noise arrives achieves low loss within the first few epochs. Such behavior could indicate the effect of noise augmentation.

The violin plot in \cref{fig:exp/random} shows that a model trained with noise augmentation is a better provides a better, though still sub-optimal, estimate of the natural accuracy.
Note first that the test set accuracy is always around $10\%$. As labels are assigned uniformly at random to our MNIST examples, the expected value of test accuracy is $10\%$, which is expected as the training and test sets are uncorrelated once we re-assign uniform random labels, thus the prediction on the test set is no better than a random guess over 10 classes.

\begin{figure}[h]
\centering
\begin{minipage}{0.45\linewidth}
    \centering
    \includegraphics[width=0.9\linewidth]{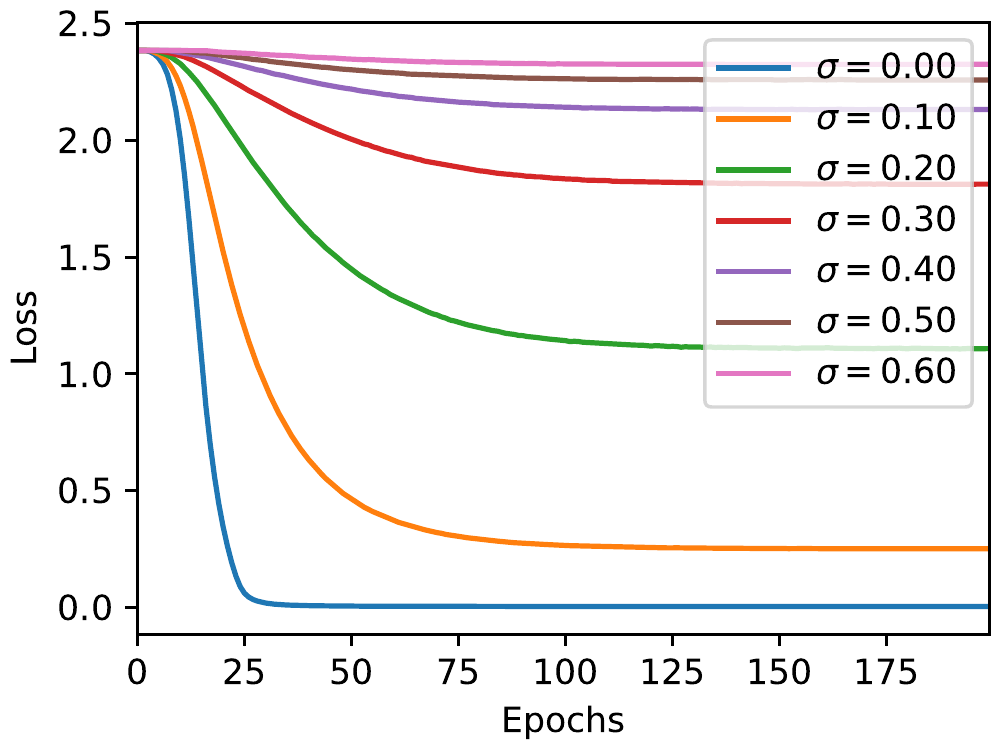}
    \caption{Averaged Training Loss vs Training Epochs for Different Noise Levels.}
    \label{fig:exp/rand/converge}
\end{minipage}
\hfill
\begin{minipage}{0.45\linewidth}
    \centering
    \includegraphics[width=0.9\linewidth]{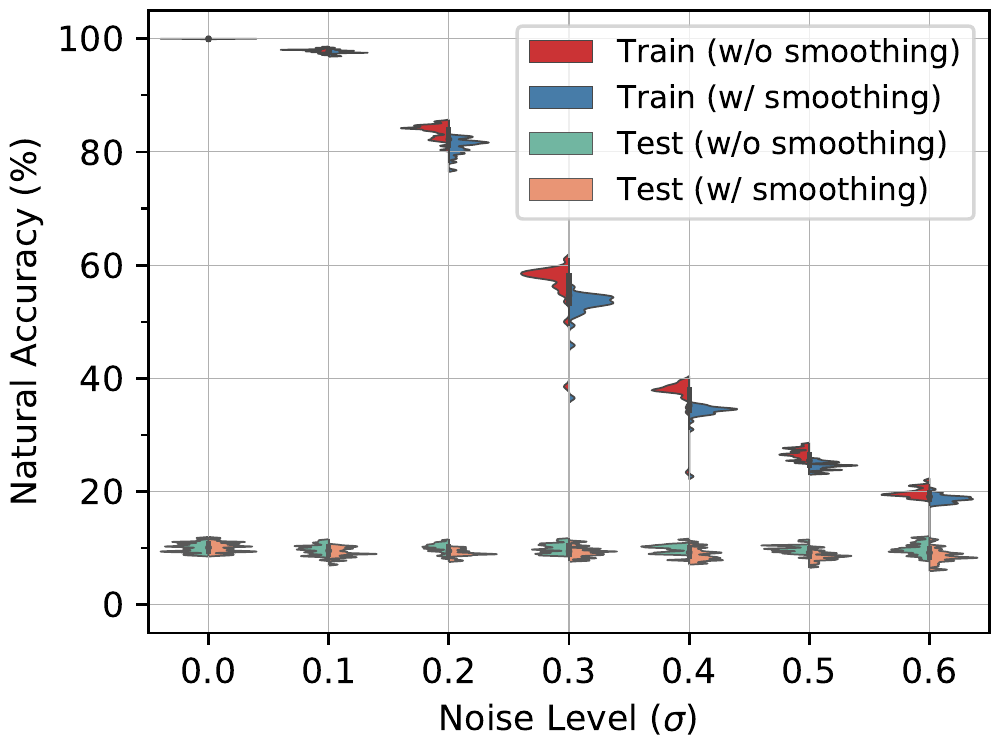}
    \caption{Natural Accuracy (\%) vs Noise Levels ($\sigma$) on Randomly-labeled MNIST}
    \label{fig:exp/random}
\end{minipage}
\end{figure}

We now focus on the training set accuracy;  note that the model with zero noise level achieves $100\%$ accuracy on the training set for all enumerated instances of labeling. This suggests that there exists a realizable hypothesis within the hypothesis class, before smoothing.
However, we see that the obtained accuracy decreases consistently as the model is trained with larger noise levels, and the estimation error also decreases as the test set accuracy remains the same. 

Apparently, smoothing should not work in this scenario, as close points are not expected to have the same label. So the training set accuracy with smoothing (blue) is always worse than that without smoothing (red).
We can make the same observations as before, even when smoothing is making the wrong decisions. These results suggest that \textbf{noise augmentation indeed reduces the size of the realizable hypothesis class, in the sense that this procedure makes it harder for the model to realize a given hypothesis}, as we have discussed in \cref{sec:labelClustering}.
Therefore, we base these implications on noise augmentation, and in particular, its impact on reducing the realizable hypothesis class.

To summarize, noise augmentation, we have empirically verified \cref{thm:threshold}.  Furthermore, we observe that not only does smoothing reduce the size of the hypothesis class, but we also notice a relationship between the statistical dispersion, and the size of $\hypothesisClassSmooth$.

\section{Related Work}
\label{sec:related}

Our work mainly relates to the recent research in randomized smoothing. We also compare against other works that analyze noise augmentation in the aspects of adversarial robustness.

\subsection{Randomized Smoothing}
Randomized smoothing is a procedure that produces a new classifier from an existing base classifier. It provides certifiable adversarial robustness by leverages the base classifier's robustness to random noise. Several earlier works proposed randomized smoothing as a heuristic defense without providing any robustness guarantees, for instance from Cao et al.~\cite{region_based} and Liu et al.~\cite{self_ensemble}.
Lecuyer et al.~\cite{DBLP:conf/sp/LecuyerAG0J19} proved the first robustness guarantee for randomized smoothing, utilizing Laplacian noise and its well-studied inequalities from the differential privacy literature. Subsequently, Cohen et al.~\cite{DBLP:conf/icml/CohenRK19} summarized these approaches as randomized smoothing and provided a tighter robustness guarantee with Gaussian noise.

Confirming the tightness of a robustness guarantee from Gaussian noise, Blum et al. recently showed that, for $\ell_p$ norm-bounded adversaries, $p > 2$, on high dimensional images where pixel values are bounded between $0$ and $255$, the noise comes to dominate any useful information in the images, thereby leading to trivial smoothed classifiers \cite{blum2020random}.

Recently, several works highlighted the importance of the training procedure for randomized smoothing. Salman et al.~\cite{smoothing-pgd} proposed the first PGD attack for randomized smoothing and thus improved the robustness guarantee via adversarial training. Similarly, Li et al.~\cite{stabilityTraining_smoothing} improved this guarantee by introducing stability training to improve the base model's robustness to Gaussian noise.

However, all of these works did not explicitly or formally explore the relationship between the training of the base model and the following smoothing procedure. Specifically, they did not explain why randomness has to be used in both the training and inference time. The subsequent implications on adversarial robustness and classification accuracy were not explained either.
For instance, Lecuyer et al.~\cite{DBLP:conf/sp/LecuyerAG0J19} empirically pointed out that the noise should also be added in the training phase. Cohen et al.~\cite{DBLP:conf/icml/CohenRK19} also empirically observed that the same noise level should be used in the training and inference time. We build on both works by formally exploring  the relationship between noise augmentation and randomized smoothing. We also consider adversarial accuracy and generalization ability.
Despite the improved robustness guarantee from Salman et al.~\cite{smoothing-pgd} and Li et al.~\cite{stabilityTraining_smoothing}, they did not formally explain the connection between the training procedure and smoothing like we do.

Earlier work from Liu et al.~\cite{self_ensemble} established the equivalence between the training with noise (at each layer) and Lipschitz regularization, but this explanation does not apply to the implications of randomized smoothing. In contrast, our explanation in the aspects of the realizable hypothesis class covers both the connection and its implications.
Cao et al.~\cite{region_based} claimed that the noise only needs to be used in the test phase. This happens when the added noise is small and below the threshold that we found.

We emphasize that we do not compare against randomized smoothing approaches in the aspects of providing robustness guarantees. Instead, we seek to highlight and explain a connection between the training and smoothing procedure, as well as its implications on adversarial robustness and classification accuracy.

\subsection{Noise Augmentation}

Indeed, the training with added noise and its robustness properties have been well studied in the noise augmentation literature, but not in the context of randomized smoothing. For instance, Fawzi et al.~\cite{robustness_noise} and Franceschi et al.~\cite{robustness_gaussian} related the robustness of classifiers from random noise to adversarial examples. Zantedeschi et al.~\cite{gaussian_augmentation} introduced Gaussian data augmentation and observed that the added noise sometimes even improve the accuracy.

The above works were referenced in the randomized smoothing literature, but were not sufficient to explain the connection between noise augmentation and randomized smoothing. In contrast, we provide an explanation in the aspects of statistical machine learning theory, which covers noise augmentation, its connection to randomized smoothing, and the subsequent implications on adversarial robustness and natural accuracy. Moreover, the critical threshold that we found for randomized smoothing explains the observation from Zantedeschi et al.~\cite{gaussian_augmentation} that the added noise can sometimes improve the accuracy.

\section{Conclusion}
\label{sec:conclusion}
Randomized smoothing is one of the promising defenses against adversarial attacks on classifiers. The randomized smoothing technique
smoothes a classifier's prediction by adding random noise to the data point. In this paper, we theoretically
and empirically explore randomized smoothing. We investigate the effect of randomized smoothing on the space 
of realizable hypothesis space, and show that for some noise levels the realizable
hypothesis space shrinks due to smoothing. This result could be a potential explanation why the natural accuracy drops due to smoothing. 
We perform extensive experiments to empirically support our theoretical investigation using well-known image classification datasets.

\newpage
\bibliographystyle{plain}
\bibliography{src/essential/ref,src/essential/ref-eval,src/essential/ref-related-work}

\end{document}